\begin{document}
\title{Fast and Accurate Intrinsic Symmetry Detection} 
% Replace with your title

\titlerunning{Fast and Accurate Intrinsic Symmetry Detection}
% Replace with a meaningful short version of your title
%
\author{Rajendra Nagar \and Shanmuganathan Raman}
%
%Please write out author names in full in the paper, i.e. full given and family names. 
%If any authors have names that can be parsed into FirstName LastName in multiple ways, please include the correct parsing, in a comment to the volume editors:
%\index{Lastnames, Firstnames}
%(Do not uncomment it, because you may introduce extra index items if you do that, we will use scripts for introducing index entries...)
\authorrunning{R. Nagar and S. Raman}
% Replace with shorter version of the author list. If there are more authors than fits a line, please use A. Author et al.
%

\institute{Electrical Engineering, Indian Institute of Technology Gandhinagar, India\\
\email{ \{rajendra.nagar,shanmuga\}@iitgn.ac.in}}
\maketitle              % typeset the header of the contribution
\begin{abstract}
	In computer vision and graphics, various types of symmetries are extensively studied since symmetry present in objects is a fundamental cue for understanding the shape and the structure of objects. In this work, we detect the intrinsic reflective symmetry in triangle meshes where we have to find the intrinsically symmetric point for each point of the shape. We establish correspondences between functions defined on the shapes by extending the functional map framework and then recover the point-to-point correspondences. Previous approaches using the functional map for this task find the functional correspondences matrix by solving a non-linear optimization problem which makes them slow. In this work, we propose a closed form solution for this matrix which makes our approach faster. We find the closed-form solution based on our following results.  If the given shape is intrinsically symmetric, then the shortest length geodesic between two intrinsically symmetric points is also intrinsically symmetric. If an eigenfunction of the Laplace-Beltrami operator for the given shape is an even (odd) function, then its restriction on the shortest length geodesic between two intrinsically symmetric points is also an even (odd) function. The sign of a low-frequency eigenfunction is the same on the neighboring points. Our method is invariant to the ordering of the eigenfunctions and has the least time complexity. We achieve the best performance on the SCAPE dataset and comparable performance with the state-of-the-art methods on the TOSCA dataset.
	\keywords{Intrinsic Symmetry\and Functional Map\and Eigenfunction}
\end{abstract}
	\section{Introduction}
	The importance of various types of symmetry is evident while solving problems such as shape segmentation, mesh repairing, shape matching, retrieving the normal forms of 3D models \cite{mitra2007symmetrization,zheng2015skeleton,ghosh2010closed}, inverse procedural modeling \cite{mitra2008symmetry}, shape recognition \cite{kazhdan2004symmetry}, shape understanding \cite{mitra2010illustrating}, shape completion \cite{speciale2016symmetry,sung2015data}, and shape editing \cite{xiao2015content,mitra2014structure,kerber2013scalable,kurz2014symmetry,wu2014real,dessein2017symmetry}. The problem of detecting intrinsic symmetry is shown to be an NP-hard problem since it amounts to finding an intrinsically symmetric point for each point \cite{ovsjanikov2008global}. However, correspondences between the intrinsically symmetric points completely characterize the intrinsic symmetry of a shape since the intrinsic symmetry is a non-rigid transformation which can not be represented a matrix as opposed to the extrinsic symmetry which can be represented by a matrix \cite{kim2010mobius}.  We exploit the functional map approach that finds correspondences between functions instead of points \cite{ovsjanikov2012functional}. Then, the point-to-point correspondences can be recovered in $O(n\log (n))$, where $n$ is the number of vertices. We extend this framework for the detection of intrinsic symmetry. The functional map framework has already been used for detecting intrinsic symmetry in previous works  (\cite{liu2015properly}, \cite{wang2017group}). The main task in these approaches was to determine the functional correspondence matrix which transforms a function to its intrinsic image. Various constraints have been enforced on this matrix. It is not known yet, how many constraints are sufficient. Further, they solved a non-linear optimization problem to estimate the functional correspondence matrix which makes the method slow. For the intrinsic symmetry detection problem, we show that the functional correspondences matrix is a diagonal matrix and a diagonal entry is $+1$ ($-1$), if the corresponding eigenfunction is an even (odd) function. This closed-form solution makes our method faster.
	
	We determine if a particular eigenfunction is even or odd based on our following result. An eigenfunction is an even (odd) function if its restriction to the shortest length geodesic between two intrinsically symmetric points is an even (odd) function. Therefore, we need to find a few accurate pairs of intrinsically symmetric points which we find using our following results. If we directly pair points based on the similarity between their heat kernel signatures \cite{sun2009concise}, we may get false pairs. For example, a pair of points on the tip of the index finger and the tip of the ring finger of the same hand of a human model. The reason is that, if two neighboring points are subjected to the same strength heat sources, then their heat diffusion processes will also be similar because of the very small sizes of the fingers with respect to the body size. However, we observer that the sign of a low-frequency eigenfunction on the neighboring points is the same. Hence, we put high penalty for pairing two points if signs of first few low-frequency eigenfunctions are the same for both the points. The models of the benchmark datasets are obtained by applying an imperfect isometry, so the theory only holds approximately. Furthermore, some of the triangles may not be Delaunay triangles.  Therefore, we may not get accurate correspondences using the original eigenfunctions. Hence, we transform the original eigenfunctions to make them near perfect even or odd functions. Following are our main contributions.
	\begin{enumerate}
		\item We propose a novel approach to find a few accurate pairs of intrinsically symmetric points based on the following property of eigenfunctions: the signs of low-frequency eigenfunction on neighboring points are the same.
		\item We propose a novel and efficient approach for finding the functional correspondence matrix. We prove that the functional matrix for the intrinsic symmetry detection problem is a diagonal matrix and a diagonal entry is $+1\;(-1)$ if the corresponding eigenfunction is an even (odd) function.
		\item We propose a novel approach to determine the sign of an eigenfunction by showing that,  if a manifold contains intrinsic symmetry and an eigenfunction is an even (odd) function, then its restriction to the shortest length geodesic between any two intrinsically symmetric points is an even (odd) function.
		\item We transform the eigenfunctions to make them more invariant to self-isometry. 
	\end{enumerate}    
	\section{Related Works}
	Reflective symmetry detection methods are categorized by the types of the input data and the reflective symmetry present in the input data. The input data can be a digital image, point cloud, triangle mesh, etc. The main types of reflective symmetries are extrinsic and intrinsic. The well known methods for detecting the extrinsic symmetry are \cite{mitra2006partial,podolak2006planar,martinet2006accurate,li2016efficient,shi2016symmetry,thomas2013detecting,sipiran2014approximate,speciale2016symmetry,loy2006detecting} and for detecting the intrinsic symmetry are \cite{raviv2010full,ovsjanikov2008global,berner2009generalized,berner2011shape,jiang2013skeleton,kim2011blended,korman2014probably,li2015approximate,lukavc2017nautilus,raviv2010full,shi2016symmetry,zheng2015skeleton,yoshiyasu2016symmetry,xu2012multi,wang2011symmetry,shehu2014characterization,panozzo2012fields,raviv2010diffusion}. Furthermore, the intrinsic symmetry can be characterized as global and partial intrinsic symmetry. Our method finds global and partial intrinsic symmetry in triangle meshes. We discuss only the relevant works and suggest the readers to follow the excellent state-of-the-art report in \cite{mitra2013symmetry} and the survey in \cite{liu2010computational}.
	
	Ovsjanikov \emph{et al.} detected intrinsic symmetry using the global point signature (GPS) \cite{ovsjanikov2008global}. The main claim was that the GPS embedding transforms the problem from intrinsic to extrinsic symmetry detection. They showed that the GPS embedding was robust to the topological noises. They found pairs of symmetric points by comparing the GPS of one point to the signed-GPS of the other point. The time complexity of determining the sign (even or odd) of an eigenfunction is $O(n\log (n))$ since they compared GPSs of all the possible pairs. Furthermore, the time complexity of the overall method is $O(k^3 n\log (n))$ excluding the computation of eigenfunctions, where $k$ is the number of eigenfunctions used. We propose a more efficient method which takes ($O(kn\log(n))$) for detecting intrinsic symmetry. Furthermore, we observe that the approach by \cite{ovsjanikov2008global} is sensitive to the sign flip and eigenfunction ordering. Whereas, our method is independent of sign flip and ordering since we determine the sign of each eigenfunction independently from the others. Mitra \emph{et al.} used a voting based approach to detect intrinsic symmetry and then applied transformation in the voting space to deform the input model to have perfect extrinsic symmetry \cite{mitra2007symmetrization}.  Xu \emph{et al.} used a generalized voting scheme to find the partial intrinsic symmetry curve without explicitly finding the intrinsically symmetric point for each point \cite{xu2009partial}. Xu \emph{et al.} efficiently found pairs of intrinsically symmetric points using a voting based approach \cite{xu2012multi}. They factored out symmetry based on the scale of symmetry. However, they needed to tune a parameter depending on how much the input shape is distorted \cite{xu2012multi}. The methods by Zheng \emph{et al.} (\cite{zheng2015skeleton}, \cite{jiang2013skeleton}) also used voting approach. These voting based methods do not utilize spatial coherency. Therefore, they may produce pairs of intrinsically symmetric points which may not be spatially continuous.  Furthermore, they may have high complexity due to a large number of possible pairs for the voting. 
	
	In \cite{raviv2010full}, the authors proposed a non-convex optimization framework to accurately detect full and partial symmetries of 3D models. However, the initialization severely affects the performance, and the complexity also is very high. Lipman \emph{et al.} efficiently found the pairs of intrinsically symmetric points in point clouds and triangle meshes using the novel symmetry factored embedding technique \cite{lipman2010symmetry}. However, their main bottleneck is the time complexity which is $O(n^{2.5}\log(n))$. Kim \emph{et al.} used anti-M{\"o}bius transformation to accurately find intrinsically symmetric pairs of points \cite{kim2010mobius}. They first find a sparse set of pairs of intrinsically symmetric points. Then, they transform intrinsic symmetry into extrinsic symmetry using the M{\"o}bius transform.  However, they required $O(n^2)$ space for mid-edge flattening and $O(|\mathcal{S}|^4)$ for finding the anti-M{\"o}bius transformation, where $\mathcal{S}$ is the set of symmetry invariant points. Furthermore, false pairs in the first step may severely affect the overall performance. 
	\section{Intrinsic Symmetry Detection}
	\subsection{Background}
	Let $\mathcal{M}$ be a compact and connected 2-manifold representing the input shape. Let $L^2(\mathcal{M})=\{f:\mathcal{M}\rightarrow\mathbb{R}|\left\langle f,f\right\rangle_\mathcal{M}=\int_{\mathcal{M}}f^2(x)dx<\infty\}$ be the space of square integrable functions defined on $\mathcal{M}$. The Laplace-Beltrami operator on a shape $\mathcal{M}$ is defined as $\Delta_\mathcal{M}f=-\text{div}_\mathcal{M}(\nabla_\mathcal{M}f)$ and admits an eigenvalue decomposition $
	\Delta_\mathcal{M}\phi_i(x)=\lambda_i\phi_i(x),\forall x\in\mathcal{M}$. Here, $0=\lambda_1\leq\lambda_2\leq\ldots$ are the eigenvalues and $\phi_1,\phi_2,\ldots$ are the corresponding eigenfunctions. The eigenfunctions $\phi_1,\phi_2,\ldots$ form a basis for the space $L^2(\mathcal{M})$. Therefore, any function $f\in L^2(\mathcal{M})$ can be represented as $
	f(x)=\sum_{i=1}^{\infty}\left\langle f,\phi_i\right\rangle_\mathcal{M}\phi_i(x), \forall x\in\mathcal{M}.
	$ 
	The functional map framework was first proposed in \cite{ovsjanikov2012functional} for establishing point-to-point dense correspondence between two isometric shapes. The main idea was to establish correspondences between the functions, defined on the shapes, rather than the points. This idea reduced the time complexity to $O(n\log n)$. Let $\mathcal{M}$ and $\mathcal{N}$ be two shapes. Let $T_{\text{f}}:L^2(\mathcal{N})\rightarrow L^2(\mathcal{M})$ be a linear mapping between functions defined on these shapes.  That is, if $g:\mathcal{N}\rightarrow \mathbb{R}$ and $f:\mathcal{M}\rightarrow \mathbb{R}$ are two corresponding functions then $T_{\text{f}}(g)=f$. The mapping $T_{\text{f}}$ is represented by a matrix $\mathbf{C}\in\mathbb{R}^{k\times k}$ such that $\mathbf{b}=\mathbf{Ca}$, where $\mathbf{a}=\begin{bmatrix} a_1&a_2&\ldots&a_k\end{bmatrix}^\top$  and $\mathbf{b}=\begin{bmatrix} b_1&b_2&\ldots&b_k\end{bmatrix}^\top$ are the representations of the functions $g$ and $f$ in the truncated bases $\{\phi_i^\mathcal{N}\}_{i=1}^k$ and $\{\phi_i^\mathcal{M}\}_{i=1}^k$, respectively. Therefore, the main goal is to find the matrix $\mathbf{C}$ which completely characterizes the dense correspondence between the two shapes.
	\subsection{Functional Maps for Intrinsic Symmetry Detection}
	We extend the functional map framework for detecting the intrinsic symmetry which can be thought of as a shape correspondence problem where we have to find the correspondences between the points of the same shape rather than the points on the two different shapes. The functional map framework is applicable for two isometric shapes also. Therefore, we can use it to detect the intrinsic symmetry since a symmetric shape is a self-isometric shape \cite{ovsjanikov2008global}. The intrinsic symmetry $T_{\text{p}}:\mathcal{M}\rightarrow\mathcal{M}$ of $\mathcal{M}$ is defined as follows. If the points $x\in\mathcal{M}$ and $y\in\mathcal{M}$ are intrinsically symmetric, then $T_{\text{p}}(x)=y$ and $T_{\text{p}}(y)=x$. We first find the mapping between the functions and then use it to find the correspondences between the intrinsically symmetric points. Let us consider the space $L^2(\mathcal{M})$ and let $T:L^2(\mathcal{M})\rightarrow L^2(\mathcal{M})$ be a functional map which maps the functions defined on the same shape. Then, this functional map  $T$ completely characterizes the intrinsic symmetry $T_{\text{p}}$ if $T(g)=f$ and $T(f)=g$, where $f,g\in L^2(\mathcal{M})$ are intrinsically symmetric functions, i.e. $f\circ T_\text{p}(x)=g(x),\;g\circ T_\text{p}(x)=f(x), \forall x\in\mathcal{M}$. Therefore, our goal is to find the matrix $\mathbf{C}$ which characterizes the functional mapping $T$ for the intrinsic symmetry detection problem. For the problem of finding correspondences between two shapes, various constraints have been imposed on the matrix $\mathbf{C}$. Then, the matrix $\mathbf{C}$ was the optimal solution of an optimization problem. However, we show that a closed form solution exists for the matrix $\mathbf{C}$ for the problem of detecting the intrinsic symmetry, which we state as follows.
	\begin{theorem}
		\label{th_1}
		Let $T:L^2(\mathcal{M})\rightarrow L^2(\mathcal{M})$ be a mapping between the functions defined on a shape $\mathcal{M}$ and $T$ characterizes the intrinsic symmetry $T_\text{p}$ of $\mathcal{M}$, i.e. $T(g)=f, T(f)=g,\;\forall f,g\in L^2(\mathcal{M})$ such that $f\circ T_\text{p}(x)=g(x),\;g\circ T_\text{p}(x)=f(x), \forall x\in\mathcal{M}$. Then, the matrix $\mathbf{C}$ representing $T$ is a diagonal matrix. $\mathbf{C}_{i,i}=+1$, if $\langle T(\phi_i),\phi_i\rangle_\mathcal{M}=+1$, and $\mathbf{C}_{i,i}=-1$, if $\langle T(\phi_i),\phi_i\rangle_\mathcal{M}=-1$.
	\end{theorem}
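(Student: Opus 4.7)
My plan is to identify $T$ concretely as the pullback operator $T(f) = f \circ T_\text{p}$ and then exploit the isometry property of $T_\text{p}$ to diagonalize $T$ in the Laplace--Beltrami eigenbasis. The condition $f\circ T_\text{p}=g$, $g\circ T_\text{p}=f$ together with the requirement $T(g)=f$ and linearity forces $T$ to act as $f\mapsto f\circ T_\text{p}$ on all of $L^2(\mathcal{M})$. From this I would extract three structural properties: (i) $T_\text{p}\circ T_\text{p}=\mathrm{id}_\mathcal{M}$ (since reflective symmetry is an involution), so $T^2=\mathrm{Id}$; (ii) $T$ is a unitary operator on $L^2(\mathcal{M})$, since $T_\text{p}$ preserves the Riemannian volume form; and (iii) $T$ commutes with $\Delta_\mathcal{M}$, since $\Delta_\mathcal{M}$ is defined intrinsically from the metric and $T_\text{p}$ is a Riemannian isometry.

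The main step would then be the following. Applying $T$ to the eigenvalue equation $\Delta_\mathcal{M}\phi_i=\lambda_i\phi_i$ and using commutativity yields $\Delta_\mathcal{M}\bigl(T(\phi_i)\bigr)=\lambda_i T(\phi_i)$, so $T(\phi_i)$ lies in the $\lambda_i$-eigenspace of $\Delta_\mathcal{M}$. Assuming simple eigenvalues, this forces $T(\phi_i)=c_i\phi_i$ for some scalar $c_i$. From $T^2=\mathrm{Id}$ I get $c_i^2=1$, hence $c_i\in\{+1,-1\}$. The matrix entries of $\mathbf{C}$ in the basis $\{\phi_i\}$ are $\mathbf{C}_{j,i}=\langle T(\phi_i),\phi_j\rangle_\mathcal{M}=c_i\,\langle\phi_i,\phi_j\rangle_\mathcal{M}=c_i\,\delta_{ij}$ (using the assumed unit-norm orthonormality of the eigenbasis). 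Thus $\mathbf{C}$ is diagonal, and $\mathbf{C}_{i,i}=c_i=\langle T(\phi_i),\phi_i\rangle_\mathcal{M}\in\{+1,-1\}$, exactly matching the claim, with the sign agreeing with whether $\phi_i$ is even ($c_i=+1$, i.e.\ $\phi_i\circ T_\text{p}=\phi_i$) or odd ($c_i=-1$, i.e.\ $\phi_i\circ T_\text{p}=-\phi_i$) under the symmetry.

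The main obstacle I anticipate is the possibility of repeated eigenvalues of $\Delta_\mathcal{M}$, where the argument $T(\phi_i)=c_i\phi_i$ no longer follows directly: $T$ only preserves the eigenspace $V_{\lambda_i}$ rather than each ray. The clean way to handle this is to observe that $T\big|_{V_{\lambda_i}}$ is a self-adjoint involution, so it admits an orthonormal eigenbasis with eigenvalues in $\{+1,-1\}$; choosing the $\phi_i$ inside each eigenspace to be this adapted basis recovers the diagonal form. Since generic shapes have simple Laplace--Beltrami spectra and the eigensolvers used in practice already return an orthonormal basis, this is a technicality rather than a genuine difficulty, and I would dispatch it in a short concluding remark after the main argument.
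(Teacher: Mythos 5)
Your proposal is correct, and it arrives at the same structural conclusion as the paper ($\mathbf{C}$ diagonal with $\pm1$ entries via orthogonality of the eigenbasis), but it gets to the crucial intermediate fact by a genuinely different and more self-contained route. The paper's proof expands $f,g$ in the eigenbasis, reads off $\mathbf{C}_{i,j}=\langle T(\phi_i),\phi_j\rangle_\mathcal{M}$ from linearity, and then simply \emph{cites} Ovsjanikov et al.\ for the key claim that $\phi_i\circ T_\text{p}=\pm\phi_i$ for non-repeating eigenvalues; diagonality then follows from $\langle\pm\phi_i,\phi_j\rangle_\mathcal{M}=0$ for $i\neq j$. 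You instead identify $T$ as the pullback $f\mapsto f\circ T_\text{p}$, establish that it is a unitary involution commuting with $\Delta_\mathcal{M}$, and derive $T(\phi_i)=c_i\phi_i$ with $c_i^2=1$ from first principles via the spectral argument --- effectively proving the lemma the paper outsources. What your approach buys is twofold: the argument is closed under its own hypotheses rather than leaning on an external result, and your treatment of repeated eigenvalues (restricting the self-adjoint involution $T$ to each eigenspace and choosing an adapted orthonormal basis) explicitly handles the degenerate case that the paper only implicitly excludes by restricting to non-repeating eigenvalues. What the paper's route buys is brevity and a formulation that does not require identifying $T$ with the pullback explicitly, since the expansion $T(\phi_i)=\sum_j c_{ij}\phi_j$ is carried out for a general linear map before the symmetry-invariance of the eigenfunctions is invoked. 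One small point worth noting: your reduction of $T$ to the pullback does not even need linearity --- the defining condition $T(g)=f$ for all pairs with $f\circ T_\text{p}=g$, combined with the involutivity of $T_\text{p}$, already determines $T(h)=h\circ T_\text{p}$ for every $h\in L^2(\mathcal{M})$.
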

	\begin{proof}
		The functions $f,g$ belong to the space $L^2(\mathcal{M})$. We can represent $f$ and $g$ as $f(x)=\sum_{i=1}^{\infty}b_i\phi_i(x)$ and $g(x)=\sum_{i=1}^{\infty}a_i\phi_i(x)$.  Since $T$ is a linear mapping, we have
		$ T(f)=T\left(\sum_{i=1}^{\infty}b_i\phi_i(x)\right)  
		= \sum_{i=1}^{\infty}b_i T\left(\phi_i(x)\right)$. Since $T\left(\phi_i(x)\right)$ is also a function in the space $L^2(\mathcal{M} )$, it can be represented in the basis $\{\phi_i\}_{i=1}^\infty$ as $T\left(\phi_i(x)\right)=\sum_{j=1}^{\infty}c_{ij}\phi_j(x)$,  where $c_{ij}=\langle T(\phi_i),\phi_j\rangle_\mathcal{M}$. Therefore, we have that $T(f) = \sum_{j=1}^{\infty} \sum_{i=1}^{\infty}c_{ij}b_i \phi_j(x)$. Since $T(f)=g$, it follows that $\sum_{j=1}^{\infty} \sum_{i=1}^{\infty}c_{ij}b_i \phi_j(x)=\sum_{j=1}^{\infty}a_j\phi_j(x)$. Therefore, $a_j=\sum_{i=1}^{\infty}c_{ij}b_i$. Equivalently, we can write it as $\mathbf{a}=\mathbf{C}\mathbf{b}$, where $\mathbf{C}_{i,j}=c_{ij}=\langle T(\phi_i),\phi_j\rangle_\mathcal{M}$. According to \cite{ovsjanikov2008global}, the eigenfunctions (corresponding to the non-repeating eigenvalues) are self-isometry invariant with sign ambiguity i.e. $\phi_i\circ T_{\text{p}}(x)=\pm\phi_i(x)$, $\forall x\in\mathcal{M}$. Furthermore, the functional map $T$ completely characterizes the intrinsic symmetry. Therefore,  $T(\phi_i)=+\phi_i$, if $\phi_i\circ T_{\text{p}}(x)=\phi_i(x)$, and $T(\phi_i)=-\phi_i$, if $\phi_i\circ T_{\text{p}}(x)=-\phi_i(x)$, $\forall x\in\mathcal{M}$.        
		Since the eigenfunctions $\phi_1, \phi_2,\ldots $ form an orthogonal basis for the space $L^2(\mathcal{M})$, we have that $\langle \pm\phi_i,\phi_j\rangle_\mathcal{M}=0$ if $i\neq j$. Therefore, $\mathbf{C}_{i,j}=\langle T(\phi_i),\phi_j\rangle_\mathcal{M}=\langle \pm\phi_i,\phi_j\rangle_\mathcal{M}$. Therefore, $\mathbf{C}_{i,j}=0$, if $i\neq j$. Hence, the matrix $\mathbf{C}$ is a diagonal matrix. Furthermore, $\mathbf{C}_{i,i}=+1$, if $\langle T(\phi_i),\phi_i\rangle_\mathcal{M}=+1$, and $\mathbf{C}_{i,i}=-1$, if $\langle T(\phi_i),\phi_i\rangle_\mathcal{M}=-1$.\qed
	\end{proof}
	Therefore, the problem of determining whether $\mathbf{C}_{i,i}=+1$ or  $\mathbf{C}_{i,i}=-1$ is equivalent to determining whether $\phi_i\circ T_\text{p}(x)=+\phi_i(x)$ or $\phi_i\circ T_\text{p}(x)=-\phi_i(x)$, $\forall x\in\mathcal{M}$. It is observed that if $\phi_i\circ T_\text{p}=+\phi_i$ then $\phi_i$ is a symmetric or an even function and if $\phi_i\circ T_\text{p}=-\phi_i$, then $\phi_i$ is an anti-symmetric or an odd function in the intrinsic sense. We can not apply the definition of the vector space here, since the domain of the eigenfunctions is not a vector space. A function $f:\mathbb{R}^2\rightarrow\mathbb{R}$ is an even function, if $f(-\mathbf{x})=f(\mathbf{x}),\forall\mathbf{x}\in\mathbb{R}^2$. This definition is not valid for the functions defined on the manifolds, since if $x\in\mathcal{M}$, then it may not always be true that $-x\in\mathcal{M}$. However, we generalize the following property of vector spaces to the manifolds to determine the sign of eigenfunctions. Let $f:\mathbb{R}^2\rightarrow\mathbb{R}$ be a function symmetric on $\mathbb{R}^2$ and $\ell=\{\mathbf{x}:\mathbf{x}=t\mathbf{x}_1+(1-t)\mathbf{x}_2, t\in[0,1]\}$ be the line segment joining the mirror symmetric points $\mathbf{x}_1$ and $\mathbf{x}_2$. Then, it is trivial to show that the restriction  $f_{\ell}:\ell\rightarrow\mathbb{R}$ of the function $f$ on the set $\ell$ is also symmetric. Here, we also observe that the set $\mathbb{R}^2$ is symmetric about any of its coordinate axes and the set $\ell$ is also symmetric. We formally generalize these results on manifolds as follows.
	\begin{theorem}
		\label{th_2}
		Let $\mathcal{M}  $ be a compact and connected 2-manifold. Let there exist a self-isometry $T_\text{p}:\mathcal{M}\rightarrow\mathcal{M}$ on $\mathcal{M}$.
		Let $x,y\in\mathcal{M}$ be two points which are intrinsically symmetric, i.e., $T_\text{p}(x)=y$ and $T_\text{p}(y)=x$. Let $\gamma(t):[0,1]\rightarrow\mathcal{M}$ be the shortest length geodesic curve between the points $x$ and $y$ such that $\gamma(0)=x$ and $\gamma(1)=y$. Then, $T_\text{p}(\gamma(t))=\gamma(1-t)$ and $T_\text{p}(\gamma(1-t))=\gamma(t), \forall t\in[0,1]$. 
	\end{theorem}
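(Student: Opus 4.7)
\medskip

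\noindent\textbf{Proof proposal.} My plan is to show that the curve $\tilde{\gamma}(t):=T_\text{p}(\gamma(1-t))$ is itself a shortest length geodesic from $x$ to $y$, and then appeal to uniqueness (or an equivalent selection argument) to conclude $\tilde{\gamma}=\gamma$, which is exactly the desired identity $T_\text{p}(\gamma(1-t))=\gamma(t)$, or equivalently $T_\text{p}(\gamma(t))=\gamma(1-t)$.

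First, I would check the endpoints: $\tilde{\gamma}(0)=T_\text{p}(\gamma(1))=T_\text{p}(y)=x$ and $\tilde{\gamma}(1)=T_\text{p}(\gamma(0))=T_\text{p}(x)=y$, so $\tilde{\gamma}$ goes from $x$ to $y$, exactly as $\gamma$ does. Next, I would use the fact that $T_\text{p}$ is an isometry of $\mathcal{M}$, so it preserves the first fundamental form and therefore takes geodesics to geodesics with the arc-length parametrization preserved up to direction. The reparametrization $t\mapsto 1-t$ only reverses orientation, which preserves the geodesic property. Hence $\tilde{\gamma}$ is a geodesic curve on $\mathcal{M}$. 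Because $T_\text{p}$ preserves lengths of curves,
\[
\mathrm{length}(\tilde{\gamma})=\mathrm{length}(T_\text{p}\circ\gamma(1-\cdot))=\mathrm{length}(\gamma),
\]
so $\tilde{\gamma}$ has the same length as the shortest length geodesic $\gamma$ between $x$ and $y$; in particular $\tilde{\gamma}$ is itself a shortest geodesic between these same two points.

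At this point I would invoke uniqueness of the shortest length geodesic between $x$ and $y$ (which is implicit in the statement's use of the definite article ``the shortest length geodesic''): since $\gamma$ and $\tilde{\gamma}$ are both unit-speed shortest geodesics from $x$ to $y$, we must have $\tilde{\gamma}(t)=\gamma(t)$ for all $t\in[0,1]$. Substituting back the definition of $\tilde{\gamma}$ and replacing $1-t$ by $t$ gives $T_\text{p}(\gamma(t))=\gamma(1-t)$ for every $t\in[0,1]$, and applying $T_\text{p}$ once more (and using $T_\text{p}\circ T_\text{p}=\mathrm{id}$ on the pair $\{x,y\}$ and, more generally, on the geodesic since $T_\text{p}$ is an involutive isometry on reflective intrinsic symmetry) yields the companion identity $T_\text{p}(\gamma(1-t))=\gamma(t)$.

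The step I expect to be the main obstacle is precisely the uniqueness assumption: in general, intrinsically symmetric pairs may admit several minimizing geodesics (think of antipodal configurations). I would handle this by either (a) restricting to the generic situation where the shortest geodesic is unique, which is the situation implicitly assumed by the theorem's wording and is the one used downstream to restrict eigenfunctions, or (b) noting that the set of minimizing geodesics from $x$ to $y$ is stable under the involution $\gamma\mapsto T_\text{p}\circ\gamma(1-\cdot)$, so one can choose a representative fixed by this involution and run the argument on that representative. Either way, the essence of the argument is the same: the symmetry of the endpoints under an isometric involution forces the minimizing geodesic joining them to be itself invariant, with its midpoint as the fixed point of the reparametrized symmetry.
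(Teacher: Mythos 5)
Your proposal is correct and follows essentially the same route as the paper's proof: apply the isometry to the minimizing geodesic, note that isometries send shortest geodesics to shortest geodesics, check the swapped endpoints, and invoke uniqueness of the minimizer to conclude. The only difference is cosmetic (you transform $\gamma(1-t)$ where the paper transforms $\gamma(t)$), and your explicit discussion of the uniqueness caveat is in fact more careful than the paper's parenthetical remark about continuous symmetries.
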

	\begin{proof}
		Let $\beta(t)=T_\text{p}(\gamma(t))$. We have to show that $\beta(t)=\gamma(1-t)$. Since $T_\text{p}$ is an isometry, according to (Proposition 16.3, \cite{gallier2012notes}, Chapter 3, p91 \cite{o1983semi}) $T_\text{p}$  maps a shortest length geodesic on $\mathcal{M}$ to a shortest length geodesic on $\mathcal{M}$. Therefore, $\beta(t)$ is also a shortest length geodesic. Now, we have  $\beta(0)=T_\text{p}(\gamma(0))=T_\text{p}(x)=y$ and $\beta(1)=T_\text{p}(\gamma(1))=T_\text{p}(y)=x$. Therefore, $\beta(t)$ is the shortest length geodesic between the points $x$ and $y$ such that $\beta(0)=y$ and $\beta(1)=x$. Since there can only be a single shortest length geodesic curve between two points (except continuous symmetry, like sphere),  both the geodesics $\gamma(t)$ and $\beta(t)$ trace the same path. However, their start and end points are flipped. Therefore, $\beta(t)=\gamma(1-t)\Rightarrow T_\text{p}(\gamma(t))=\gamma(1-t)$. Since the self-isometry is an involution, i.e. $T_\text{p}\circ T_\text{p}(x)=x,\forall x\in\mathcal{M}$, we have $\gamma(t)=T_\text{p}\circ T_\text{p}(\gamma(t))=T_\text{p}(T_\text{p}(\gamma(t)))=T_\text{p}(\gamma(1-t))\Rightarrow T_\text{p}(\gamma(1-t))=\gamma(t)$.\qed
	\end{proof}
	The intuitive is that if a shape is intrinsically symmetric, then the shortest length geodesic curve between any two intrinsically symmetric points is also intrinsically symmetric. This result helps us to determine the sign of the eigenfunctions of the Laplace-Beltrami operator. First, we show that the result $\phi_i\circ T_\text{p}(x)=\pm\phi_i(x)\;\forall x\in\mathcal{M}$ holds true if we restrict the eigenfunctions on the shortest length geodesic curve between the intrinsically symmetric points. The restriction of $\phi_i$ on a curve $\gamma(t)$ is defined as $\phi_i\circ\gamma(t):[0,1]\rightarrow\mathbb{R}$.  Since, $\phi_i\circ T_\text{p}=\pm\phi_i, \forall i$ such that $i$-th eigenvalue is non repeating, each eigenfunction is always either an even (sign$=+1$) or an odd (sign$=-1$) function. Hence, if restriction of the eigenfunction $\phi_i$ on the shortest length  geodesic between the intrinsically symmetric points has sign $+1$ ($-1$), then the sign of  $\phi_i$ is also $+1 \;(-1)$. 
	\begin{proposition}
		\label{pro_1}
		Let  $x,y\in\mathcal{M}$ be two intrinsically symmetric points and $\gamma(t):[0,1]\rightarrow\mathcal{M} $ be the shortest length geodesic curve between the points $x$ and $y$. Then, $
		\phi_i\circ\gamma(t)=\pm\phi_i\circ\gamma(1-t),\forall t\in[0,1].$
	\end{proposition}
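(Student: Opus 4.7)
The plan is to combine Theorem~\ref{th_2} with the sign-ambiguity property of Laplace--Beltrami eigenfunctions under self-isometry that was already invoked in the proof of Theorem~\ref{th_1}. First, I would recall from Ovsjanikov \emph{et al.}\ that for any eigenfunction $\phi_i$ corresponding to a non-repeating eigenvalue of $\Delta_{\mathcal{M}}$, the self-isometry $T_\text{p}$ acts on $\phi_i$ by a global sign: there exists $\varepsilon_i\in\{+1,-1\}$ such that $\phi_i\circ T_\text{p}(z)=\varepsilon_i\,\phi_i(z)$ for every $z\in\mathcal{M}$. This is the only manifold-wide identity I need, and crucially $\varepsilon_i$ depends only on $\phi_i$, not on the point $z$.

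Second, I would specialize the above identity to points lying on the geodesic curve $\gamma$. Setting $z=\gamma(t)$ gives $\phi_i(T_\text{p}(\gamma(t)))=\varepsilon_i\,\phi_i(\gamma(t))$ for every $t\in[0,1]$. Theorem~\ref{th_2} now identifies $T_\text{p}(\gamma(t))$ explicitly: since $\gamma$ is the unique shortest geodesic between the intrinsically symmetric pair $x,y$, and $T_\text{p}$ is an isometry mapping such geodesics to shortest geodesics with endpoints interchanged, one has $T_\text{p}(\gamma(t))=\gamma(1-t)$. Substituting yields $\phi_i(\gamma(1-t))=\varepsilon_i\,\phi_i(\gamma(t))$, which, rewritten in the notation of the statement, is precisely $\phi_i\circ\gamma(t)=\pm\,\phi_i\circ\gamma(1-t)$ for all $t\in[0,1]$.

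The main obstacle is conceptual rather than technical: the proposition is essentially a two-line corollary of Theorem~\ref{th_2} and the sign-ambiguity lemma. The only subtlety to highlight carefully in the write-up is that the same sign $\varepsilon_i$ governs the identity for \emph{every} $t$ along the curve; that is, the restriction $\phi_i\circ\gamma$ is either even or odd about $t=\tfrac{1}{2}$ as a whole, rather than being $\pm$ pointwise with a $t$-dependent sign. Making this uniformity explicit is what legitimizes the subsequent strategy of determining the sign of $\phi_i$ on $\mathcal{M}$ by inspecting its behaviour on a single geodesic between one accurately paired intrinsically symmetric pair of points.
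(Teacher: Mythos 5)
Your proposal is correct and follows essentially the same route as the paper: both combine Theorem~\ref{th_2} (which gives $T_\text{p}(\gamma(t))=\gamma(1-t)$) with the global sign-ambiguity $\phi_i\circ T_\text{p}=\pm\phi_i$, differing only in the direction of the substitution. Your explicit remark that the sign $\varepsilon_i$ is uniform in $t$ is a point the paper leaves implicit but relies on.
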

	\begin{proof}
		Using  Theorem \ref{th_2}, we proceed as $\phi_i(\gamma(t))=\phi_i(T_\text{p}(\gamma(1-t)))=(\phi_i\circ T_\text{p})(\gamma(1-t))$. We know that $\phi_i\circ T_\text{p}=\pm\phi_i$. Hence, $\phi_i(\gamma(t))=\pm\phi_i(\gamma(1-t))$. \qed 
	\end{proof}
	We apply the above result to find whether an eigenfunction is even or odd as follows. We first determine a set of candidate pairs of intrinsically symmetric points. Then we find the shortest length geodesic curve between each pair. Then, for each eigenfunction $\phi_i$, we determine if the restricted  eigenfunction $\phi_i\circ\gamma(t)$ is an even or an odd function for each pair.
	\subsection{Computation of Eigenfunction of Laplace-Beltrami Operator}
	Let $\mathcal{T}=(\mathcal{V},\mathcal{F},\mathcal{E})$ be a triangle mesh, where $\mathcal{V}$ is the set of $n$ vertices, $\mathcal{F}$ is the set of faces, and $\mathcal{E}$ is the set of edges. We follow the  method in \cite{pinkall1993computing} to find the eigenvalues and the corresponding eigenvectors of the Laplace-Beltrami operator in the discrete settings. The discrete Laplace-Beltrami operator is defined by the matrix $\mathbf{L}=-\mathbf{A}^{-1}\mathbf{M}$. Both $\mathbf{M}$ and $\mathbf{A}$ are of size $n\times n$ and are defined as follows.
	
	\begin{minipage}[c]{0.45\textwidth}
		\begin{center}
			$$
			\mathbf{M}_{j,j^\prime}=\begin{cases}
			\frac{\cot(\alpha_{jj^\prime})+\cot(\beta_{jj^\prime})}{2} &\text{ if } (j,j^\prime)\in\mathcal{E}\\
			-\sum_{j^{\prime\prime}\neq j}\mathbf{M}_{j,j^{\prime\prime}} &\text{ if } j=j^\prime\\
			0&\text{ if } (j,j^\prime)\notin\mathcal{E},
			\end{cases}
			$$
		\end{center}
	\end{minipage}
	\begin{minipage}[c]{0.4\textwidth}
		\begin{center}
			\epsfig{figure=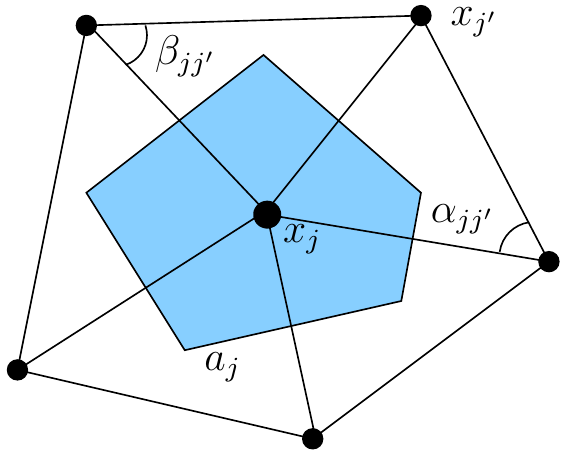,width=0.5\linewidth}
		\end{center}
	\end{minipage}
	\\$\mathbf{A}=\text{diag}(a_1,a_2,\ldots,a_n)$, and, $a_j=\frac{1}{3}\sum_{j^\prime,j^{\prime\prime}:(j,j^\prime,j^{\prime\prime})\in\mathcal{F}}a_{jj^\prime j^{\prime\prime}}$ (the area of the shaded region in the inset figure). Here, $a_{jj^\prime j^{\prime\prime}}$ is the area of the face $(j,j^\prime,j^{\prime\prime})$. In the discrete settings, we denote eigenfunctions by $\boldsymbol{\phi}_i,i\in[k]$, and are the solutions of the generalized eigen-problem $\mathbf{M}\boldsymbol{\phi}_i=-\lambda_i \mathbf{A}\boldsymbol{\phi}_i$. Here, $[k]=\{1,2,\ldots,k\}$. We denote the value of $\boldsymbol{\phi}_i$ at the $j$-th point or vertex by $\boldsymbol{\phi}_i(x_j)$.
	\subsection{Detecting Pairs of Intrinsically Symmetric Points}
	\label{subsec:3_4}
	In order to detect the intrinsic symmetry, according to Theorem \ref{th_1}, we need to find the  matrix $\mathbf{C}$ defined as $\mathbf{C}_{i,i^\prime}=0$, if $i\neq i^\prime$, $\mathbf{C}_{i,i}=+1$, if $\boldsymbol{\phi}_i$ is an even function, and $\mathbf{C}_{i,i}=-1$, if $\boldsymbol{\phi}_i$ is an odd function. According to Proposition \ref{pro_1}, the eigenfunction $\boldsymbol{\phi}_i$ is an even function (odd) if its restriction on the shortest length geodesic between intrinsically symmetric points is also an even function (odd). Therefore, our first task is to find a few accurate candidate pairs of intrinsically symmetric points. 
	\begin{figure}
		\centering
		\stackunder{\epsfig{figure=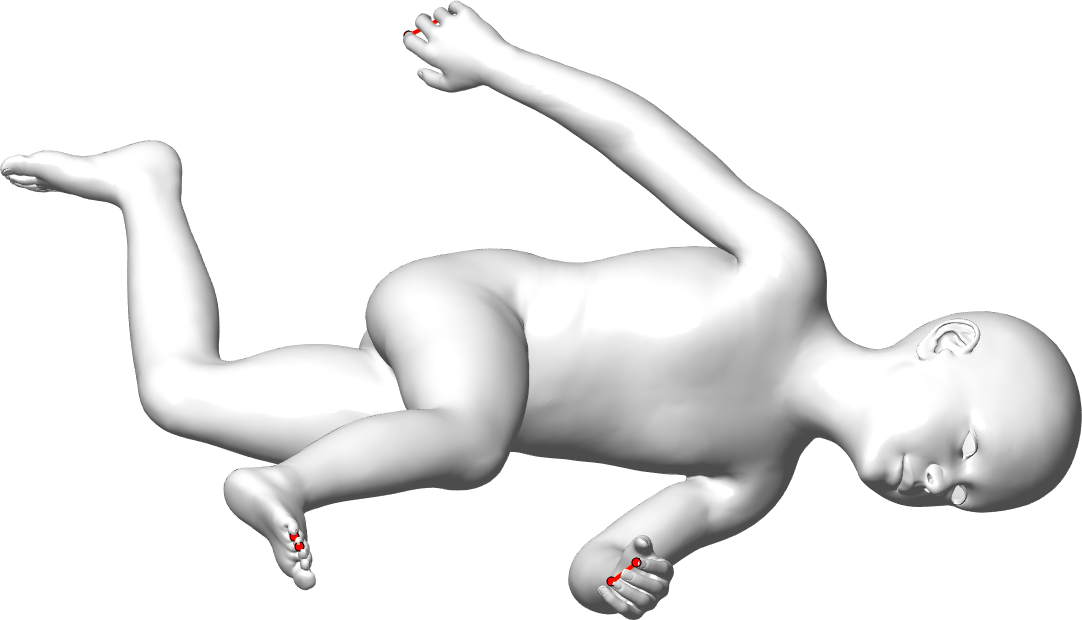,width=0.39\linewidth}}{(a)}
		\stackunder{\epsfig{figure=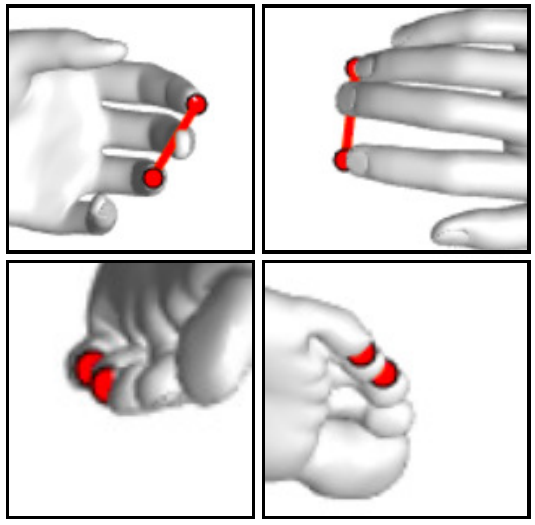,width=0.2\linewidth}}{(b)}
		\stackunder{\epsfig{figure=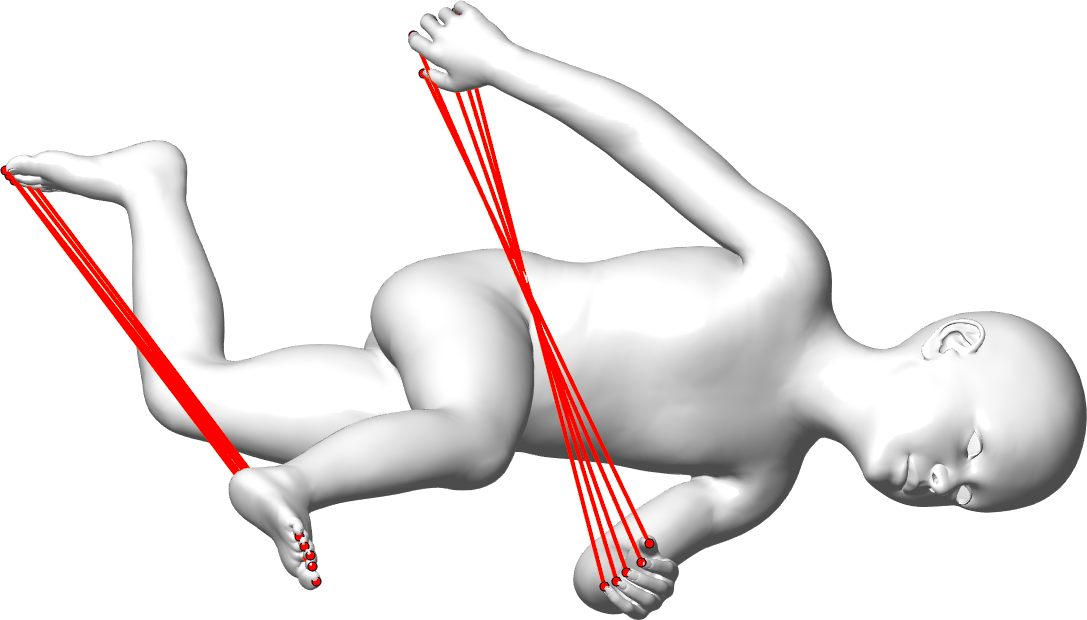,width=0.39\linewidth}}{(c)}
		\caption{Direct HKS  matching vs. restricted HKS matching. (a)-(b) Pairs of intrinsically symmetric points using HKS similarity. It suffers from the fact that if two neighboring points are subjected to a same strength heat source, then their heat diffusions will be similar. (c) Pairs obtained by restricted HKS matching. We observe that the sign of low frequency eigenfunctions on two neighboring points is the same. Therefore, we assign a high cost for pairing two points having the same vectors of signs of eigenfunctions.}
		\label{mspps} 
	\end{figure}
	We find the heat kernel signature (HKS) feature points \cite{sun2009concise} on the given mesh $\mathcal{T}$. HKS feature points are the local maxima of the function $\sum_{i=1}^{k}e^{-\lambda_i t_h}\boldsymbol{\phi}_i^2(x_j)$ defined on the shape $\mathcal{T}$ for all vertices $x_j$, $j\in[n]$. We use $k=13$ for our experiments. Let $\{x_j\}_{j\in\mathcal{I}}$ be the set of HKS feature points, where, $\mathcal{I}\subset[n]$, and $|\mathcal{I}|=d$. We set $t_h=\frac{4\ln10}{\lambda_2}$ as defined in \cite{sun2009concise}. To find the pairs of intrinsically symmetric points, we do not directly match the HKS descriptors. The reason is that this strategy may fail very frequently. In Fig.\ref{mspps}(a), we show  the detected HKS feature points and  the pairs  detected by  direct matching of their HKS descriptors on the Kids model \cite{cosmo2016shrec}. In Fig.\ref{mspps}(b), we show the zoomed pairs of symmetric points on the feet and the hands for better visualization. We observe that the tips of two fingers of the same hand got paired. The reason behind getting such matches is that if two neighboring points are subjected to a same strength heat source, then their heat diffusion processes will be similar. Therefore, their HKS descriptors will be similar. Hence, we have to assign a high cost for pairing two neighboring points. Furthermore, determining if two points are neighbors requires us to find the geodesic distance between each possible pair. This can be a costly process since there can be a large number of  possible pairs. We propose a fast approach to determine if two points are neighbors based on the following observation.\\
	\textbf{Observation 1.} \textit{Let $x_{j}$ and $x_{j^\prime}$ be any two neighboring points in $\mathcal{T}$. Then,  $\mathbf{s}_{j}=\mathbf{s}_{j^\prime}$ for low frequency eigenfunctions, i.e., for small $k$}.\\ Here, $\mathbf{s}_{j}$ and $\mathbf{s}_{j^\prime}$ are the $j$-th and $j^\prime$-th columns of the matrix $\mathbf{S}=\begin{bmatrix} \mathbf{s}_1&\mathbf{s}_2&\ldots&\mathbf{s}_d\end{bmatrix}\in\{-1,+1\}^{k\times d}$. The $i$-th element of the vector $\mathbf{s}_j$ is the sign of the $i$-th eigenfunction on the $j$-th point of the set $\{x_j\}_{j\in\mathcal{I}}$. We give an intuitive understanding of this observation based on the nodal domains of the eigenfunctions. The nodal set of the eigenfunction  $\mathbf{\phi}_i$ is the set $\mathcal{B}_i=\{x\in\mathcal{M}:\phi_i(x)=0\}$. A nodal domain is a component in the set $\mathcal{M}\backslash\mathcal{B}_i$. The set $\mathcal{M}\backslash\mathcal{B}_i$ is the collection of components or segments on the shape which are separated by the set $\mathcal{B}_i$. The value of the eigenfunction $\phi_i$ in any of its nodal domain is either positive or negative \cite{zelditch2013eigenfunctions,reuter2009discrete}. Therefore, if two points lie in the same nodal domain, then the eigenfunction $\phi_i$ will have the same sign on both the points. Now, the neighborliness of two points depends on the size or the area of the nodal domain. According to the Courant's
	nodal domain theorem, the number of nodal domains of $\phi_i$ is less than $i$  \cite{zelditch2013eigenfunctions}. Therefore, the size of the nodal domains remains significantly large for low frequency eigenfunctions. Hence, neighboring points remain in the same nodal domain for all the low frequency eigenfunctions. Therefore, the sign of all low frequency eigenfunctions remains the same on neighboring points. We choose eigenfunctions corresponding to the first $13$ lowest eigenvalues and corresponding eigenvectors for our experiment. 
	
	Hence, we assign a high cost for pairing the points $x_{j}$ and $x_{j^\prime}$,  if their sign vectors $\mathbf{s}_{j}$ and $\mathbf{s}_{j^\prime}$ are the same. Let $\mathbf{H}=\begin{bmatrix} \mathbf{h}_1&\mathbf{h}_2&\ldots&\mathbf{h}_d\end{bmatrix}\in\mathbb{R}^{h\times d}$ be the heat kernel signatures matrix of the detected $d$ HKS feature points, where we choose $h=50$ steps.  We define the affinity matrix $\mathbf{W}\in\mathbb{R}^{d\times d}$ such that $\mathbf{W}_{j,j^\prime}=\|\mathbf{h}_j-\mathbf{h}_{j^\prime}\|_2+q\psi(\|\mathbf{s}_j-\mathbf{s}_{j^\prime}\|_2),\forall(j,j^\prime)\in[d]\times[d],j\neq j^\prime$ and $\mathbf{W}_{j,j}=q, \forall j\in[d]$.  Here $\psi(t)=0$, if $t>0$ and $\psi(t)=1$, if $t=0$, and $q$ is any large positive constant. We now pair these points such that if $x_{j^\prime}$ is intrinsically symmetric point of the point $x_j$, then $x_{j}$ should be the intrinsically symmetric point of the point $x_{j^\prime}$. We achieve this by representing the matching by a matrix $\mathbf{\Pi}\in\{0,1\}^{d\times d}$, where $\mathbf{\Pi}_{j,j^\prime}=1$ and $\mathbf{\Pi}_{j^\prime,j}=1$, if the points $x_j$ and $x_{j^\prime}$ form a pair and 0, otherwise.  Now, we enforce the constraints $\mathbf{\Pi}\mathbf{1}=\mathbf{1}$ and $\mathbf{\Pi}^\top\mathbf{1}=\mathbf{1}$ to achieve one-to-one matching, where $\mathbf{1}$ is a vector of size $d$ with all elements equal to 1. We get many points which can not be paired. Therefore, we cap the number of pairs by $c$ which we represent by the constraint $\mathbf{1}^\top\mathbf{\Pi}\mathbf{1}=2c$. Further, to make it feasible, we modify the one-to-one matching constraints to $\mathbf{\Pi}\mathbf{1}\leq\mathbf{1}$ and $\mathbf{\Pi}^\top\mathbf{1}\leq\mathbf{1}$. Now, we frame the problem of pairing the points in the below optimization problem.
	\begin{eqnarray}
	\underset{\mathbf{\Pi}\in\{0,1\}^{d\times d}}{\min}\sum_{j=1}^d\sum_{j^\prime=1}^d\mathbf{\Pi}_{j,j^\prime}\mathbf{W}_{j,j^\prime} 
	\text{, subject to}&\mathbf{\Pi}\mathbf{1}\leq\mathbf{1},\;\mathbf{\Pi}^\top\mathbf{1}\leq\mathbf{1},\;\mathbf{1}^\top\mathbf{\Pi}\mathbf{1}=2c.
	\label{eq:1}
	\end{eqnarray}
	We note that,  problem  (\ref{eq:1}) is equivalent to the below linear assignment problem.
	\begin{eqnarray}
	\underset{\boldsymbol{\pi}\in\{0,1\}^{d^2\times 1}}{\min}\mathtt{vec}(\mathbf{W})^\top\boldsymbol{\pi}
	\text{, subject to}&\mathbf{C}_1\boldsymbol{\pi}\leq\mathbf{1},\;\mathbf{C}_2\boldsymbol{\pi}\leq\mathbf{1},\;\mathbf{c}_3^\top\boldsymbol{\pi}=2c.
	\label{eq:2}
	\end{eqnarray}
	Here, the vector $\boldsymbol{\pi}=\mathtt{vec}(\mathbf{\Pi})$, $\mathbf{C}_2=\mathbf{1}^\top\otimes\mathbf{I}$, $\mathbf{c}_3$ is the vector of size $d^2\times 1$ with all elements equal to 1, and $\mathbf{I}$ is the identity matrix of size $d\times d$. The matrix $\mathbf{C}_1$ is $d\times d^2$ matrix and defined such that the $j$-th row is the circular shift, by $j d$ elements on the right, of the row vector of size $1\times d^2$ with the first $d$ elements equal to $1$ and the last $d^2-d$ elements equal to 0. The time complexity of this problem is exponential in the number of variables. However, the size of our problem is very small. In all our experiments $d\leq25$. We use the MATLAB function \texttt{intlinprog} to solve this problem which takes $\approx0.03$ seconds.  
	\subsection{Determining the Sign of Eigenfunctions}
	Proposition \ref{pro_1} states that the eigenfunction $\boldsymbol{\phi}_i$ is an even (odd) function, if its restriction $\boldsymbol{\phi}_i\circ\gamma_{j}(t):[0,1]\rightarrow \mathbb{R}$ on the shortest length geodesic  $\gamma_j(t)$ between any two intrinsically symmetric points $x_{j}$ and $x_{j^\prime}$ is an even (odd) function. Let $\{(x_j,x_{j^\prime})\}_{j=1}^{c}$, be the set of detected pairs of intrinsically symmetric points. We find the shortest length geodesic curve between two intrinsically symmetric points using \cite{surazhsky2005fast} with approximate setting (Dijkstra's algorithm), since the exact geodesic curve may not pass through the vertices of the mesh which may require us to perform interpolation for calculating the values of $\boldsymbol{\phi}_i\circ\gamma_{j}(t)$  for $\gamma_{j}(t)\notin\mathcal{V}$. Let $\mathbf{p}_{ij}$ be the restriction (vector of size equal to the number of vertices in the geodesic) of the eigenfunction $\boldsymbol{\phi}_i$ on the shortest length geodesic curve between the intrinsically symmetric points $x_j$ and $x_{j^\prime}$. Then, the sign $s_i$ of eigenfunction $\boldsymbol{\phi}_i$ is equal to $+1$, if $\sum_{j=1}^c\mathbf{p}_{ij}^\top\texttt{flip}(\mathbf{p}_{ij})>0$ and equal to $-1$, if  $\sum_{j=1}^c\mathbf{p}_{ij}^\top\texttt{flip}(\mathbf{p}_{ij})<0$. We do not consider the eigenfunction $\boldsymbol{\phi}_i$ if $\sum_{j=1}^c\mathbf{p}_{ij}^\top\texttt{flip}(\mathbf{p}_{ij})=0$. Equivalently, we define the diagonal entries of the functional correspondence matrix $\mathbf{C}$ as $\mathbf{C}_{i,i}=s_i$. In Fig. \ref{geo_d}(a) and (c), we show the eigenfunctions $\boldsymbol{\phi}_2$ and $\boldsymbol{\phi}_3$, respectively, with the geodesic curves for two pairs of detected intrinsically symmetric points on a shape from Kids dataset \cite{cosmo2016shrec}. In Fig. \ref{geo_d}(b) and (d), we show the functions $\boldsymbol{\phi}_i\circ\gamma_{j}(t)$ for $i=2,3$ and $j=1,2$. We observe that $\mathbf{C}_{2,2}=-1$ and $\mathbf{C}_{3,3}=+1$.
	
	One can directly use the values of an eigenfunction on the intrinsically symmetric points to find the sign instead of checking it on the geodesic between these points. However, this approach could be sensitive to the noise. If the value of an eigenfunction at the feature point has changed due to noise, then the point-based method will fail. Whereas, it is less likely that due to the noise the value of an eigenfunction will be changed at all the points on the geodesic. Our geodesic based method will detect the sign correctly due to  averaging of signs. 
	\begin{figure}[t!]
		\centering
		\stackunder{\epsfig{figure=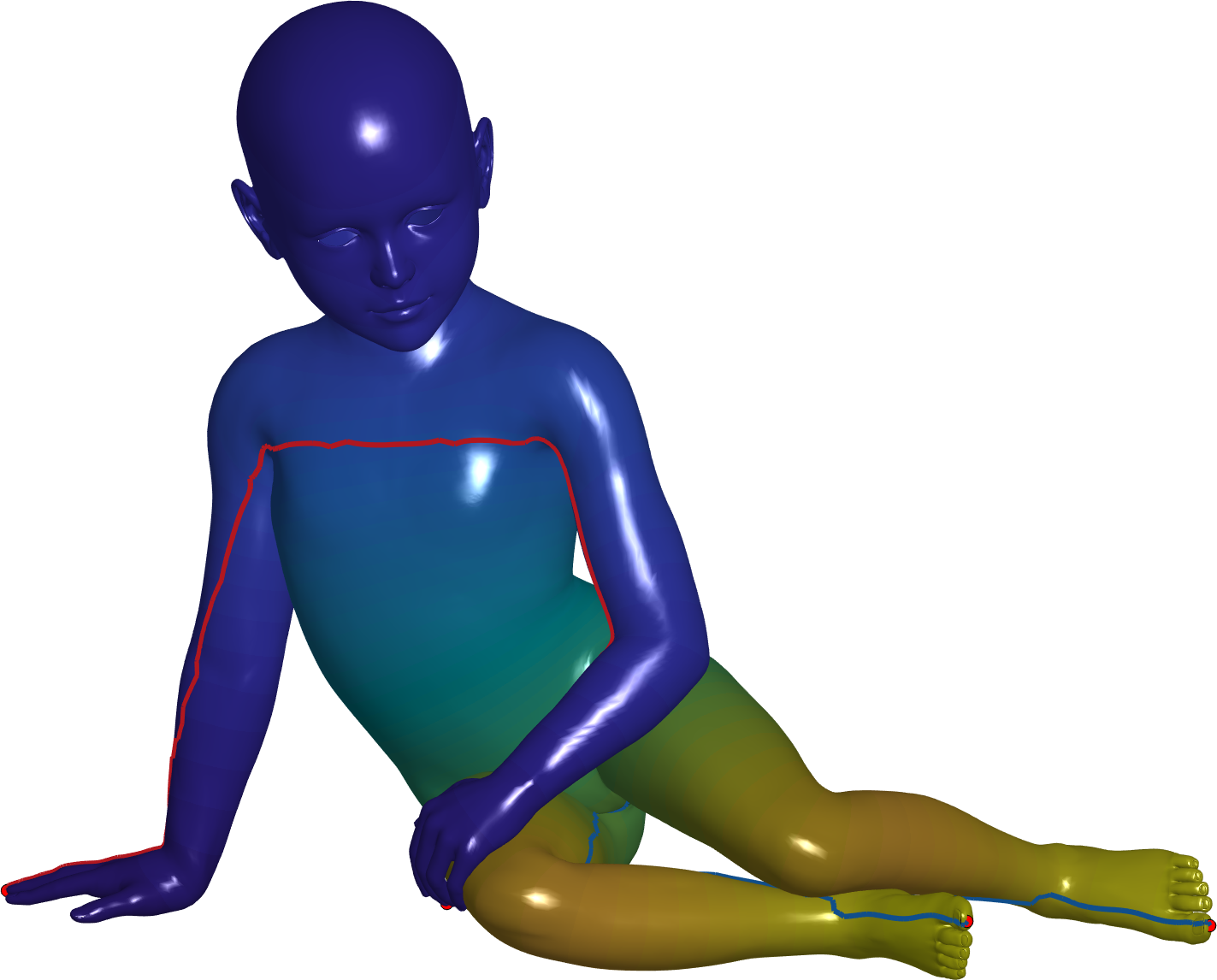,width=0.24\linewidth}}{(a)}
		\stackunder{\epsfig{figure=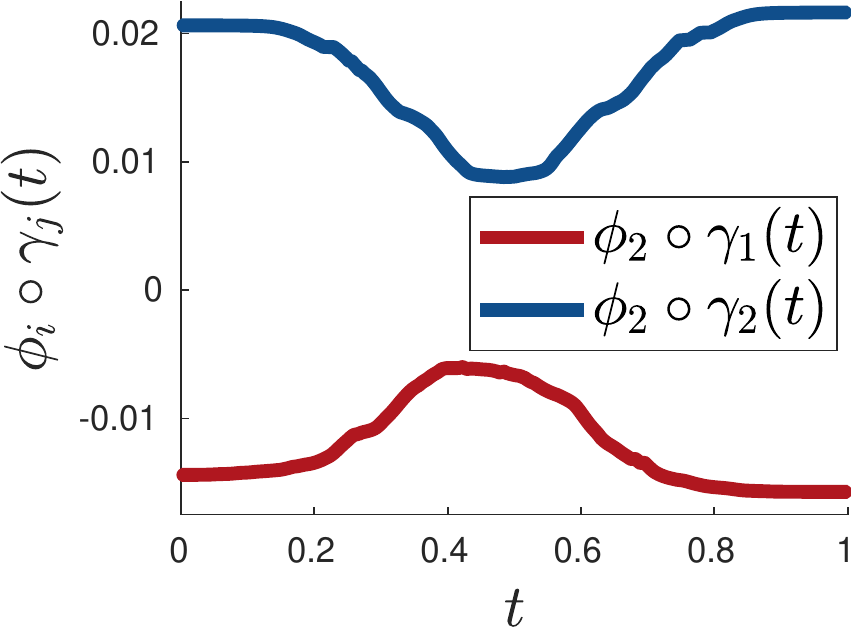,width=0.24\linewidth}}{(b)}
		\stackunder{\epsfig{figure=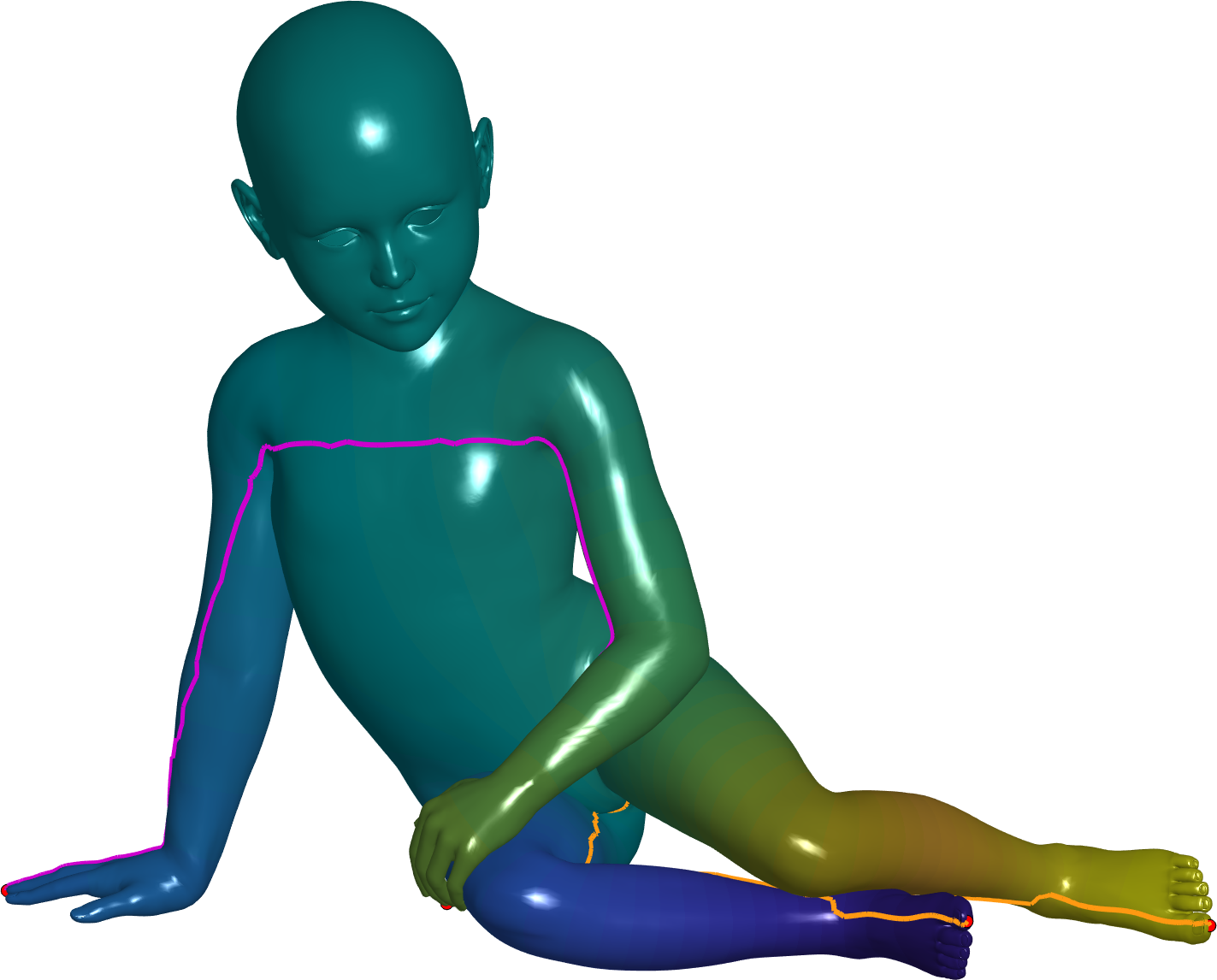,width=0.24\linewidth}}{(c)}
		\stackunder{\epsfig{figure=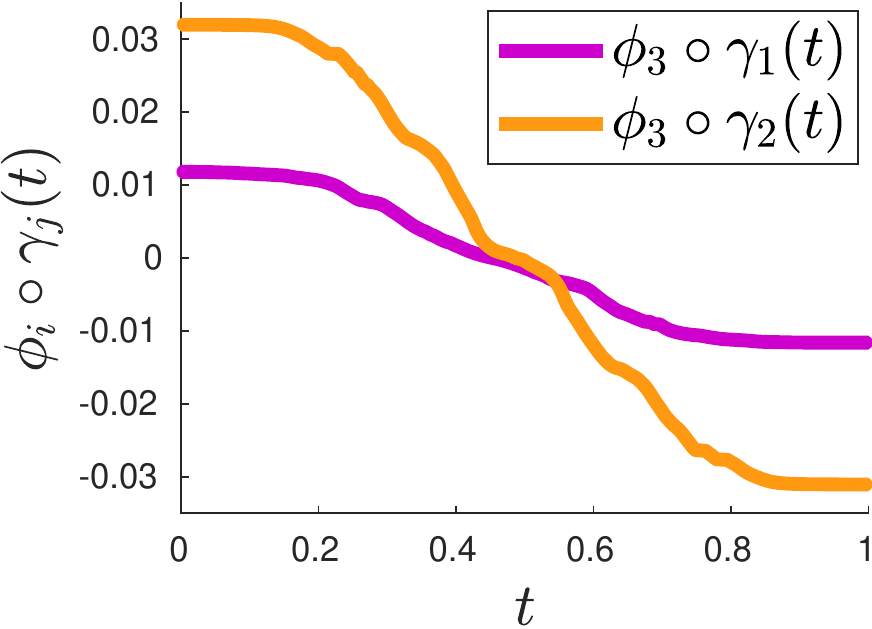,width=0.24\linewidth}}{(d)}
		\caption{(a)-(b): The eigenfunction $\boldsymbol{\phi}_2$ on the Kids model which is an even function and its restrictions $\boldsymbol{\phi}_2\circ\gamma_j(t)$, $j=1,2$, shown in red and blue colors (on hands and feet), which are also even functions. (c)-(d): The eigenfunction $\boldsymbol{\phi}_3$ on the Kids model which is an odd function and its restrictions $\boldsymbol{\phi}_3\circ\gamma_j(t)$, $j=1,2$, shown in purple and orange colors (on hands and feet), which are also odd functions.}
		\label{geo_d}
	\end{figure}
	\subsection{Correcting the Eigenfunctions}
	The models of the benchmark datasets are obtained by applying an
	imperfect isometry, so the theory only holds approximately. Furthermore, some of the triangles may not be Delaunay triangles and the eigenfunctions are sensitive to the change in the triangulation of the mesh. Therefore, all the eigenfunctions may not be perfectly even or odd which may give the erroneous symmetry detected in Section \ref{subsec:disc}. Consider Fig. \ref{geo_imp}(a), where the eigenfunction $\boldsymbol{\phi}_{10}$ is not perfectly even on the legs. We transform the eigenfunctions such that they preserve the pairs of intrinsically symmetric functions. We extend the framework in \cite{kovnatsky2013coupled}.
	Let $\mathbf{\Phi}=\begin{bmatrix}\boldsymbol{\phi}_1&\boldsymbol{\phi}_2&\ldots&\boldsymbol{\phi}_k\end{bmatrix}\in\mathbb{R}^{n\times k}$, and $\mathbf{D}=\text{diag}(\lambda_1,\lambda_2,\ldots,\lambda_k)\in\mathbb{R}^{k\times k}$.  Let $\mathbf{\Phi}\mathbf{R}$ be the transformed basis obtained by applying the linear operator $\mathbf{R}$ on the basis $\mathbf{\Phi}$. Then, we impose the constraints $\mathbf{R}^\top\mathbf{D}\mathbf{R}=\mathbf{D}$ and  $\text{off}(\mathbf{R}^\top\mathbf{D}\mathbf{R})=0$ so that  the new eigenfunctions admit to the original eigenfunction decomposition problem as proposed in \cite{kovnatsky2013coupled}, where $\text{off}(\mathbf{M})=\sum_{j}\sum_{j^\prime:j^\prime\neq j}\mathbf{M}_{j,j^\prime}^2$ for any matrix $\mathbf{M}$. 
	
	Now, let $f_j,g_j:\mathcal{M}\rightarrow\mathbb{R}$ be two functions such that $f_j$ and $g_j$ are intrinsic images of each other. That is, $f_j\circ T_\text{p}(x)=g_j(x)$ and $g_j\circ T_\text{p}(x)=f_j(x)$ are equivalent. Let $\mathbf{f}_j\in\mathbb{R}^{n}$ and $\mathbf{g}_j\in\mathbb{R}^{n}$ be the discrete versions of $f_j$ and $g_j$, respectively. Let $\mathbf{R}^\top\mathbf{\Phi}^\top\mathbf{f}_j$ and $\mathbf{R}^\top\mathbf{\Phi}^\top\mathbf{g}_j$ be the representations of the functions $\mathbf{f}_j$ and $\mathbf{g}_j$ in the transformed basis $\mathbf{\Phi}\mathbf{R}$, respectively. We want the transformed basis $\mathbf{\Phi}\mathbf{R}$ such that  $\mathbf{R}^\top\mathbf{\Phi}^\top\mathbf{f}_j=\mathbf{C}\mathbf{R}^\top\mathbf{\Phi}^\top\mathbf{g}_j$.  Let $\mathbf{F}=\begin{bmatrix}\mathbf{f}_1&\ldots&\mathbf{f}_c&\mid&\mathbf{g}_1&\ldots&\mathbf{g}_c \end{bmatrix}\in\mathbb{R}^{n\times 2c}$ and  $\mathbf{G}=\begin{bmatrix}\mathbf{g}_1&\ldots&\mathbf{g}_c&\mid&\mathbf{f}_1&\ldots&\mathbf{f}_c \end{bmatrix}\in\mathbb{R}^{n\times 2c}$ be the matrices representing $2c$ (bidirectional) pairs of intrinsically symmetric functions. We formulate the below optimization framework to find the transformation matrix $\mathbf{R}$. 
	\begin{eqnarray}
	\nonumber&\underset{\mathbf{R}}{\min} \text{ off}(\mathbf{R}^\top\mathbf{D}\mathbf{R})+\|\mathbf{R}^\top\mathbf{D}\mathbf{R}-\mathbf{D}\|_\text{F}^2\\ \text{subject to}&\mathbf{R}^\top\mathbf{\Phi}^\top\mathbf{F}=\mathbf{C}\mathbf{R}^\top\mathbf{\Phi}^\top\mathbf{G},\;
	\mathbf{R}^\top\mathbf{R}=\mathbf{I},\text{det}(\mathbf{R})=+1,\mathbf{R}\in\mathbb{R}^{k\times k}.
	\label{eq:3}
	\end{eqnarray}    
	Here, $\mathbf{R}^\top\mathbf{R}=\mathbf{I}$ follows from the fact that the transformed basis $\mathbf{\Phi} \mathbf{R}$ is an orthogonal basis. Here, the set  $\{\mathbf{R}\in\mathbb{R}^{k\times k}:\mathbf{R}^\top\mathbf{R}=\mathbf{I},\text{det}(\mathbf{R})=+1\}$ is the special orthogonal group $\mathcal{SO}(k)$. Hence, we solve the below optimization problem.
	\begin{eqnarray}
	\underset{\mathbf{R}\in \mathcal{SO}(k)}{\min}\text{ off}(\mathbf{R}^\top\mathbf{D}\mathbf{R})+\|\mathbf{R}^\top\mathbf{D}\mathbf{R}-\mathbf{D}\|_\text{F}^2+\mu\|\mathbf{R}^\top\mathbf{\Phi}^\top\mathbf{F}-\mathbf{C}\mathbf{R}^\top\mathbf{\Phi}^\top\mathbf{G}\|_\text{F}^2.
	\label{eq:4}
	\end{eqnarray} 
	\begin{figure}[t!]
		\centering
		\stackunder{\epsfig{figure=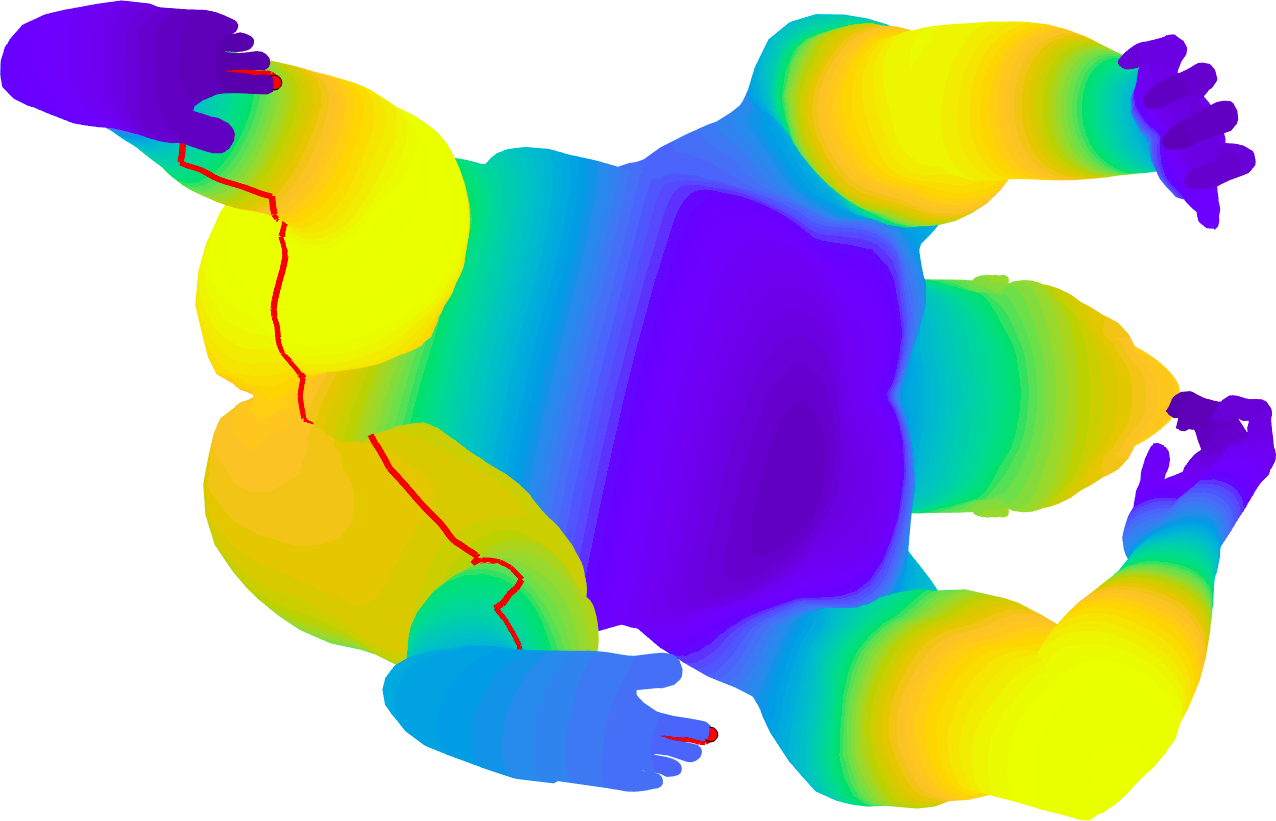,width=0.32\linewidth}}{(a) $\boldsymbol{\phi}_{10}$ before correction}
		\stackunder{\epsfig{figure=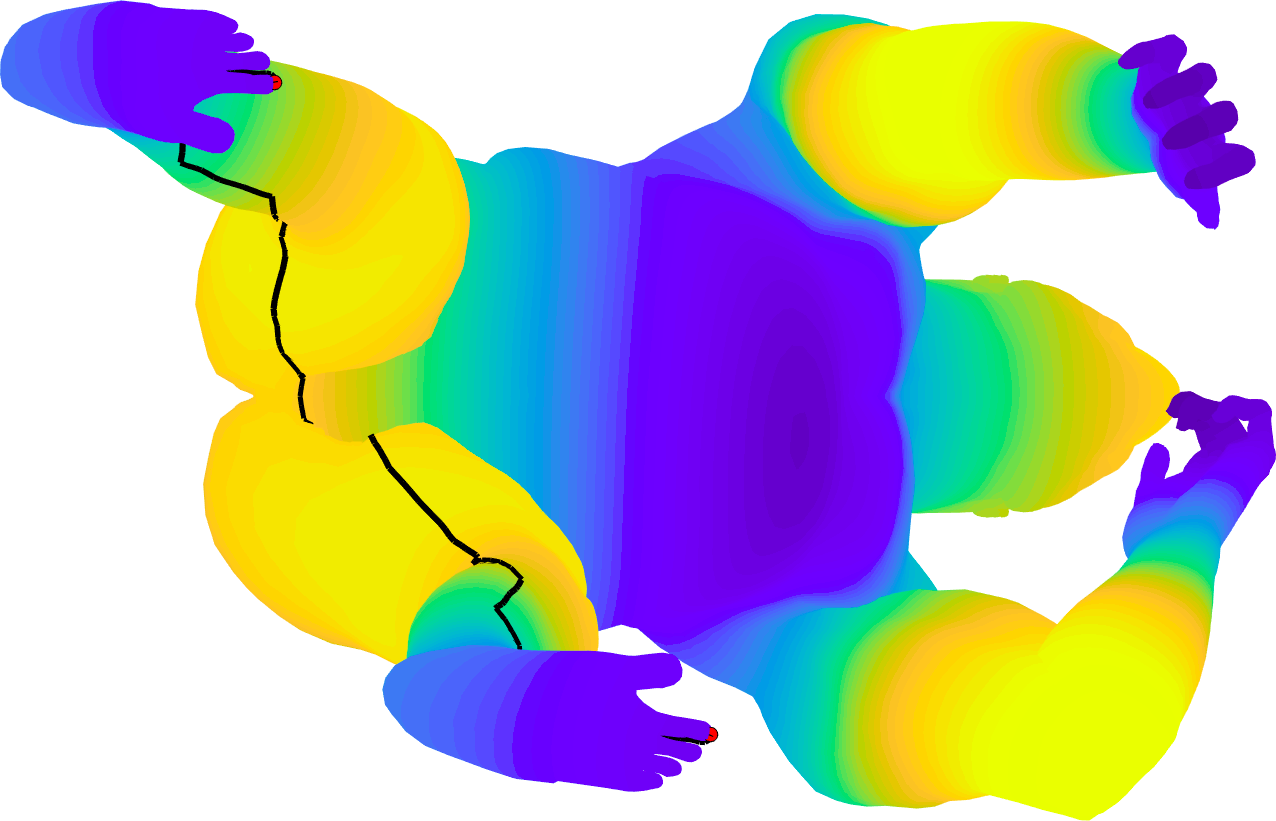,width=0.32\linewidth}}{(b) $\boldsymbol{\phi}_{10}$ after correction}
		\stackunder{\epsfig{figure=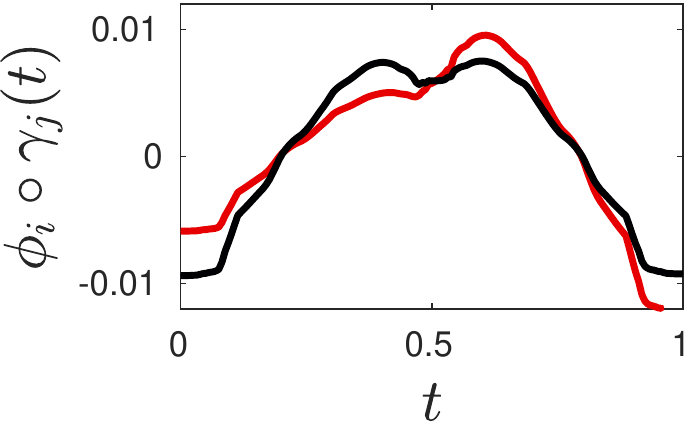,width=0.32\linewidth}}{(c) Restricted eigenfunctions}
		\stackunder{\epsfig{figure=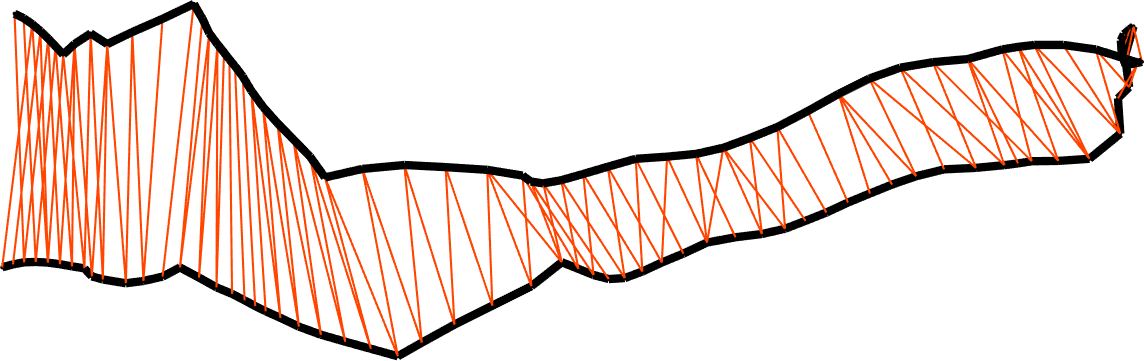,width=0.40\linewidth}
			\epsfig{figure=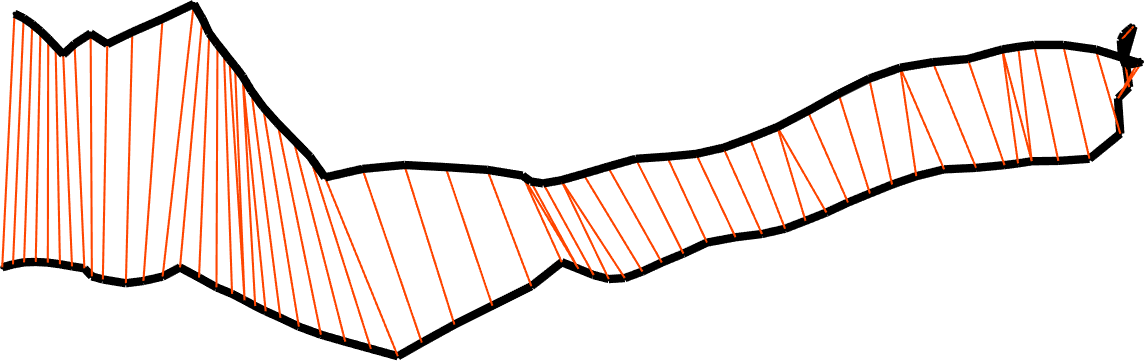,width=0.40\linewidth}}{(d) Symmetry on the geodesic on the legs before (left) and after (right) correction.}
		\caption{Visualization of the eigenfunction correction.}
		\label{geo_imp}
	\end{figure}
	We use the Riemannian-trust-region method, proposed in \cite{absil2007trust,absil2009optimization}, to solve this optimization problem. We use the \texttt{manopt} toolbox \cite{boumal2014manopt} for this purpose. We provide the Riemannian gradient and Hessian of this cost function in the supplementary material. We empirically found the optimal $\mu$ to be equal to $1$ in our experiments. We choose the functions $\mathbf{f}_j$ and $\mathbf{g}_j$ such that, $\mathbf{f}_j=1$ at the point $x_j$ and 0 everywhere else, and  $\mathbf{g}_j=1$ at the point $x_{j^\prime}$ and 0 everywhere else. Here, $x_j$ and $x_{j^\prime}$ are intrinsically symmetric points. In Fig. \ref{geo_imp}(a) and (b), we show the effect of correction on ($\boldsymbol{\phi}_{10}$). We observe that $\boldsymbol{\phi}_{10}$, which was not perfectly symmetric on the legs and the belly, becomes more symmetric. The large blue patch on the belly also got moved to center which was more towards left before correction. Here, the value of eigenfunction is color encoded, more blue implies more negative and more yellow implies more positive.  In Fig. \ref{geo_imp}(c), we show the restriction of $\boldsymbol{\phi}_{10}$ on the geodesic between the two symmetric points which becomes more symmetric after the correction. 
	\subsection{Dense Intrinsically Symmetric Correspondence}
	\label{subsec:disc}
	Let $\mathbf{f}_{j}$ be the function such that $\mathbf{f}_{j}=1$ at $x_{j}$ and 0 elsewhere. Similarly, let $\mathbf{g}_{j^\prime}$ be the function such that $\mathbf{g}_{j^\prime}=1$ at $x_{j^\prime}$ and 0 elsewhere. Let $\mathbf{R}^\top\mathbf{\Phi}^\top\mathbf{f}_{j}$ and $\mathbf{R}^\top\mathbf{\Phi}^\top\mathbf{g}_{j^\prime}$ be their basis representation. Then, if the point $x_{j}$ and $x_{j^\prime}$ are intrinsically symmetric then  $\mathbf{R}^\top\mathbf{\Phi}^\top\mathbf{f}_{j}=\mathbf{C}\mathbf{R}^\top\mathbf{\Phi}^\top\mathbf{g}_{j^\prime}$. Which is equivalent to $\mathbf{R}^\top\mathbf{\Phi}^\top\mathbf{F}=\mathbf{C}\mathbf{R}^\top\mathbf{\Phi}^\top\mathbf{G}$ if we consider all points. Now, if $\mathbf{F}$ is equal to the identity matrix of size $n\times n$, then $\mathbf{G}_{j,j^\prime}=1$, if point $x_j$ and $x_{j^\prime}$ form a pair of intrinsically symmetric points, and 0 otherwise. Now following \cite{ovsjanikov2012functional}, the intrinsically symmetric point of $x_j$ is the nearest neighbor of the $j$-th column of the matrix $\mathbf{R}^\top\mathbf{\Phi}^\top$ among the columns of the matrix $\mathbf{C}\mathbf{R}^\top\mathbf{\Phi}^\top$.  The obtained correspondences are continuous as shown in \cite{ovsjanikov2012functional}. Our method is invariant to the ordering of the eigenfunction since the sign of $\boldsymbol{\phi}_i$ and $\mathbf{C}_{i,i}$ only depend on the eigenfunction $\boldsymbol{\phi}_i$. In Fig. \ref{geo_imp}(d), we show the detected symmetry on a geodesic on legs before and after correction. 
	\begin{table}[t!]
		\begin{minipage}[b]{0.50\linewidth}
			\centering
			\caption{The total time for computing intrinsic symmetry for the methods MT \cite{kim2010mobius}, BIM\cite{kim2011blended}, OFM \cite{liu2015properly}, GRS \cite{wang2017group}, and the proposed approach on the TOSCA dataset \cite{bronstein2008numerical}.}
			\begin{tabular}{c c c c c c}
				\hline             
				& MT& BIM& OFM& GRS&Our\\ \hline
				Time (min) &- &360&60&24&\textbf{8}\\\hline
			\end{tabular}
			\label{tab:time}
		\end{minipage}
		\begin{minipage}[b]{0.50\linewidth}
			\centering
			\caption{The correspondence rates and mesh rates for the methods MT \cite{kim2010mobius}, BIM \cite{kim2011blended}, OFM \cite{liu2015properly}, GRS \cite{wang2017group}, and the proposed approach on the SCAPE dataset \cite{anguelov2005scape}.}
			\begin{tabular}{c c c c c c}
				\hline             
				& MT& BIM& OFM& GRS&Our\\ \hline
				Corr rate (\%)  &82.0 &84.8&91.7&94.5&\textbf{97.5}\\
				Mesh rate (\%)  &71.8 &76.1&97.2&98.6&\textbf{100}\\\hline
			\end{tabular}
			\label{tab:res_scape}
		\end{minipage}
	\end{table}
	\begin{table}[t!]
		\centering
		\caption{The correspondence rates and mesh rates for the methods MT \cite{kim2010mobius}, BIM \cite{kim2011blended}, OFM \cite{liu2015properly}, GRP \cite{wang2017group}, and the proposed approach on the TOSCA dataset \cite{bronstein2008numerical}.}
		\label{tab:res_tosca}
		\begin{tabular}{c c c c c c c c c c c c}
			\hline
			&\multicolumn{5}{c}{Corr rate (\%)} & &\multicolumn{5}{ c }{Mesh Rate (\%)} \\ \cline{2-6} \cline{8-12}
			& MT& BIM& OFM& GRS&Our&$\:\:\:\:\:\:$&MT& BIM& OFM& GRS&Our\\ \hline
			Cat     & 66.0  & 93.7   & 90.9   &\textbf{ 96.5 }  & \textbf{95.6 } & $\:\:\:\:\:\:$& 54.6  & 90.9    & 90.9  & 100   & 100 \\ 
			Centaur & 92.0  & \textbf{100}    &  96.0  & 92.0   & \textbf{100}   &$\:\:\:\:\:\:$ & 100   & 100     &  100  & 100   & 100 \\ 
			David   & 82.0  & \textbf{97.4 }  & 94.8   & 92.5   & \textbf{96.2}  &$\:\:\:\:\:\:$ & 57.1  & 100     &  100  & 100   & 100 \\ 
			Dog     & 91.0  & \textbf{100}    & 93.2   & 97.4   & \textbf{98.8 }  &$\:\:\:\:\:\:$ & 88.9  & 100     & 88.9  & 100   & 100  \\ 
			Horse   & 92.0  & 97.1   & 95.2   & \textbf{99.4}   & \textbf{97.3}  &$\:\:\:\:\:\:$ & 100   & 100     & 87.5  & 100   & 100 \\ 
			Michael & 87.0  & \textbf{98.9}   & 94.6   & 91.4   & \textbf{96.5}  &$\:\:\:\:\:\:$ & 75    & 100     &  100  & 100   &100  \\
			Victoria& 83.0  & \textbf{98.3}   & \textbf{98.7}   & 95.5   & 96.2  &$\:\:\:\:\:\:$ & 63.6  & 100     & 100   & 100   &   100\\ 
			Wolf    & 100   & 100    & 100    & \textbf{100}   &\textbf{100}   &$\:\:\:\:\:\:$ & 100   & 100     & 100   & 100   & 100 \\ 
			Gorilla & -     & 98.9   & 98.9   & \textbf{100}    & \textbf{100}   &$\:\:\:\:\:\:$ & -     & 100     & 100   & 100   & 100 \\ \hline
			Average & 85.0  & \textbf{98.0 }  & 95.1   &  94.5  &\textbf{ 97.8 } &$\:\:\:\:\:\:$ & 76    & 98.7    & 92.6  & \textbf{100}   & \textbf{100} \\ \hline
		\end{tabular}
	\end{table}
	\section{Results and Evaluation}
	\subsection{Time Complexity}
	Let $n$ be the number of vertices and $k$ be the number of eigenfunctions used. The feature points are the local maximums of $\sum_{i=1}^{k}e^{-\lambda_i t_h}\boldsymbol{\phi}_i^2(x_j)$. It requires us to find 2-ring neighborhoods of each vertex. We use the half-edge data structure which requires $O(1)$ time. Hence, the overall time for finding the feature points is $O(n)$. The optimization problem in Eq. (\ref{eq:4}) takes  $O(nk^2)$ when solved using Riemannian trust region method. We use the ANN library \cite{arya1998optimal} to find the nearest neighbor for each column of the matrix $\mathbf{R}^\top\mathbf{\Phi}^\top\in\mathbb{R}^{k\times n}$ among the columns of the matrix $\mathbf{C}\mathbf{R}^\top\mathbf{\Phi}^\top\in\mathbb{R}^{k\times n}$ which takes time  $O(kn\log(n))$. Hence, the time complexity is $O(kn\log(n))+O(n)+O(nk^2)\approx O(kn\log(n))$, since $k<<n$. In our experiments $k=13$ (empirical) and $n\approx 15000$. The time complexity of computing the $k$ smallest eigenvalues and corresponding eigenvectors of symmetric matrix is $O(n^2k)$ which is common to all spectral decomposition based methods.
	\subsection{Comparison}
	\textbf{Evaluation Metrics}. We use the following evaluation metrics to compare the results of our method to that of the state-of-the-art methods as defined in \cite{kim2010mobius}. \textbf{Correspondence rate}: Let $(x_j,x_{j^\prime}^\text{g})$ be the ground truth correspondence and \begin{figure}[t!]
		\begin{center}
			\epsfig{figure=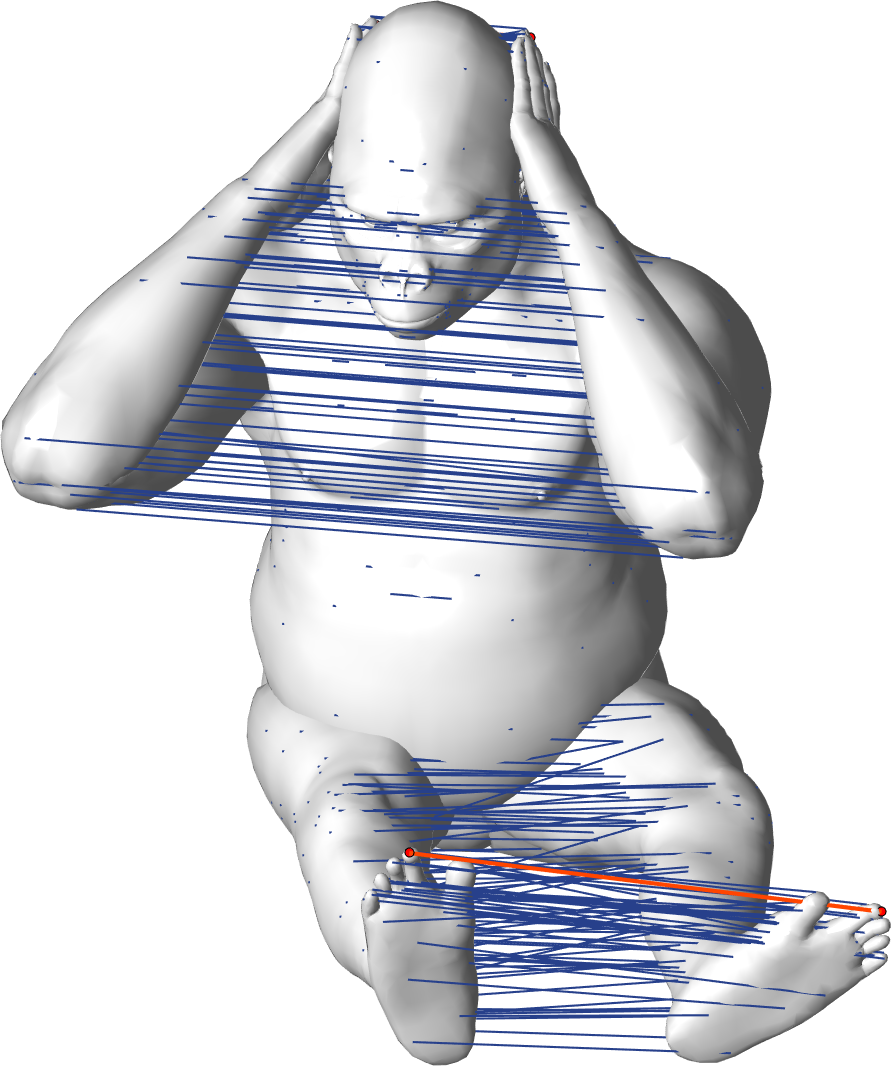,width=0.18\linewidth}
			\epsfig{figure=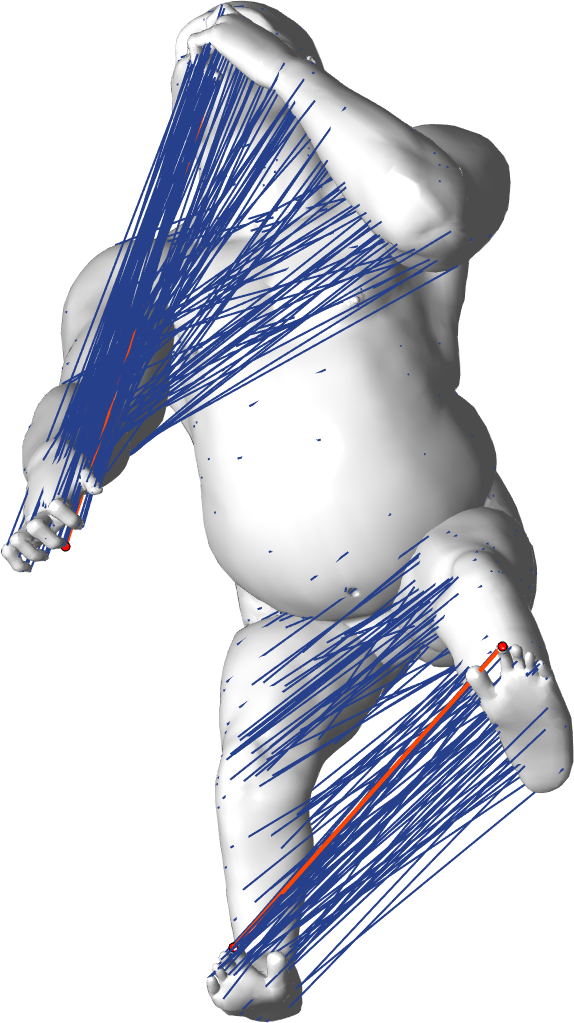,width=0.12\linewidth}
			\epsfig{figure=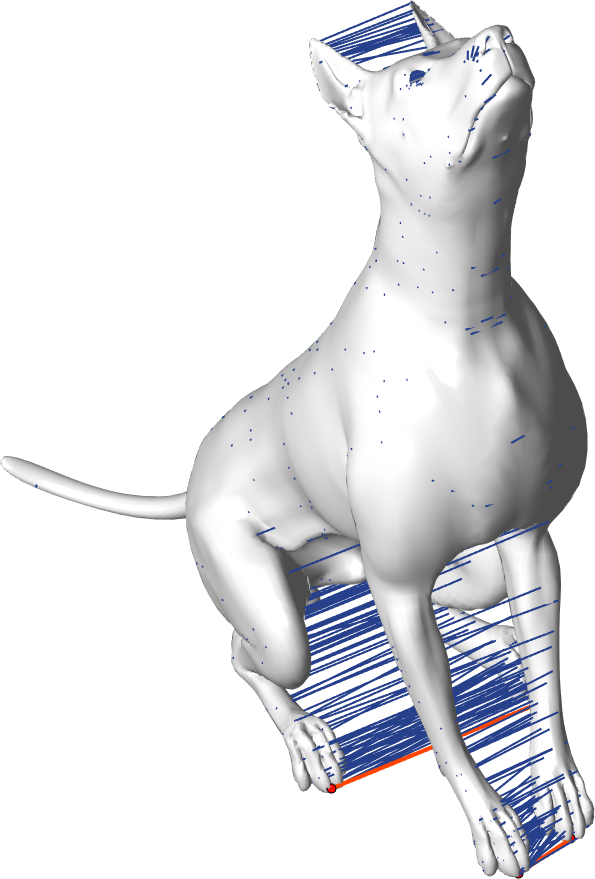,width=0.14\linewidth}
			\epsfig{figure=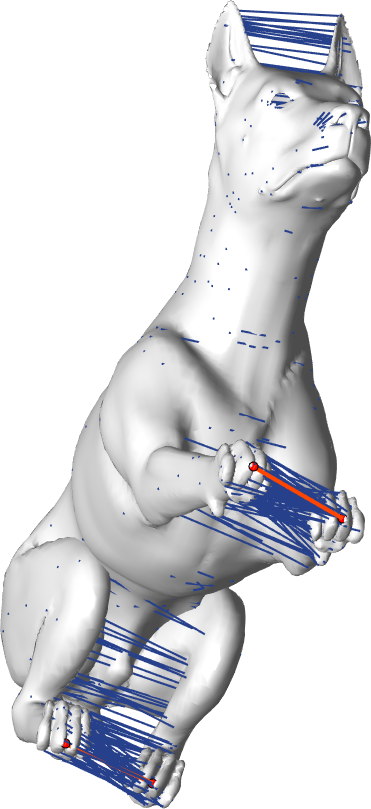,width=0.1\linewidth}
			\epsfig{figure=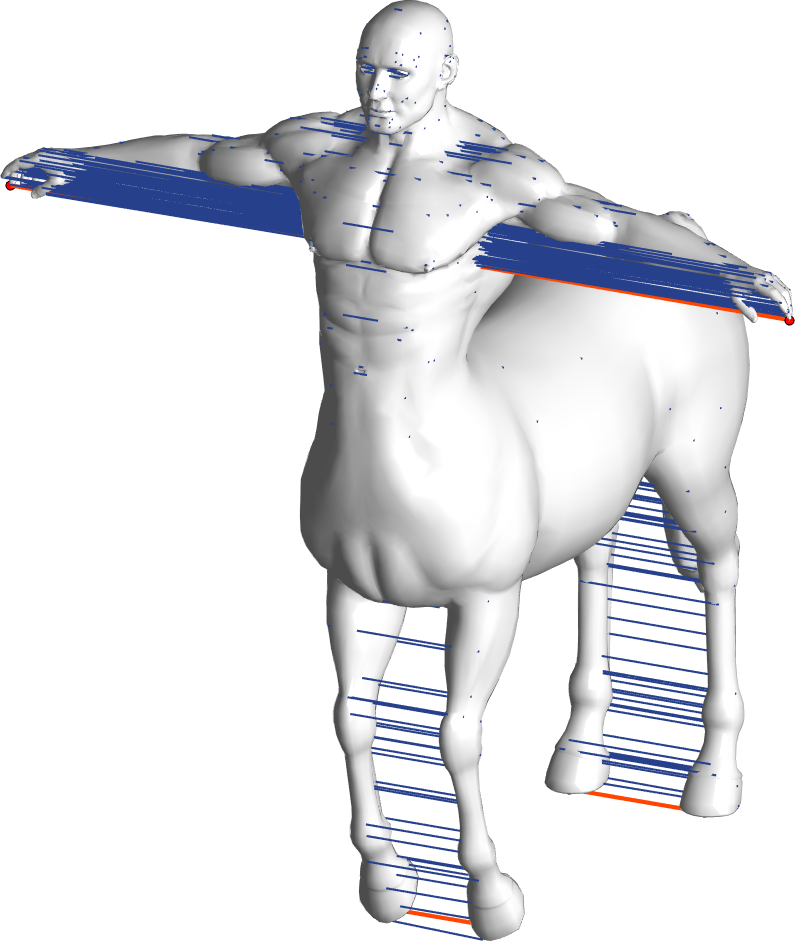,width=0.18\linewidth}
			\epsfig{figure=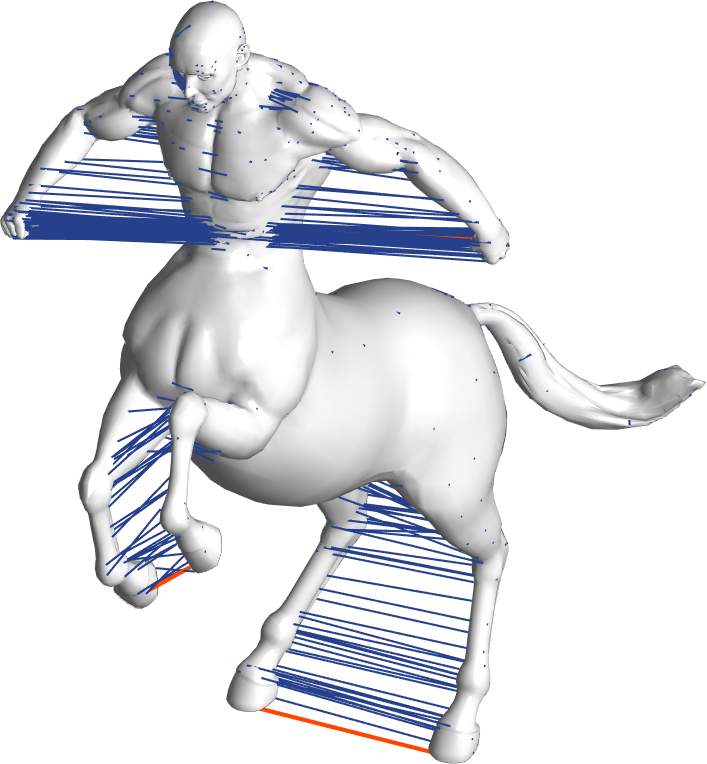,width=0.20\linewidth}
			\epsfig{figure=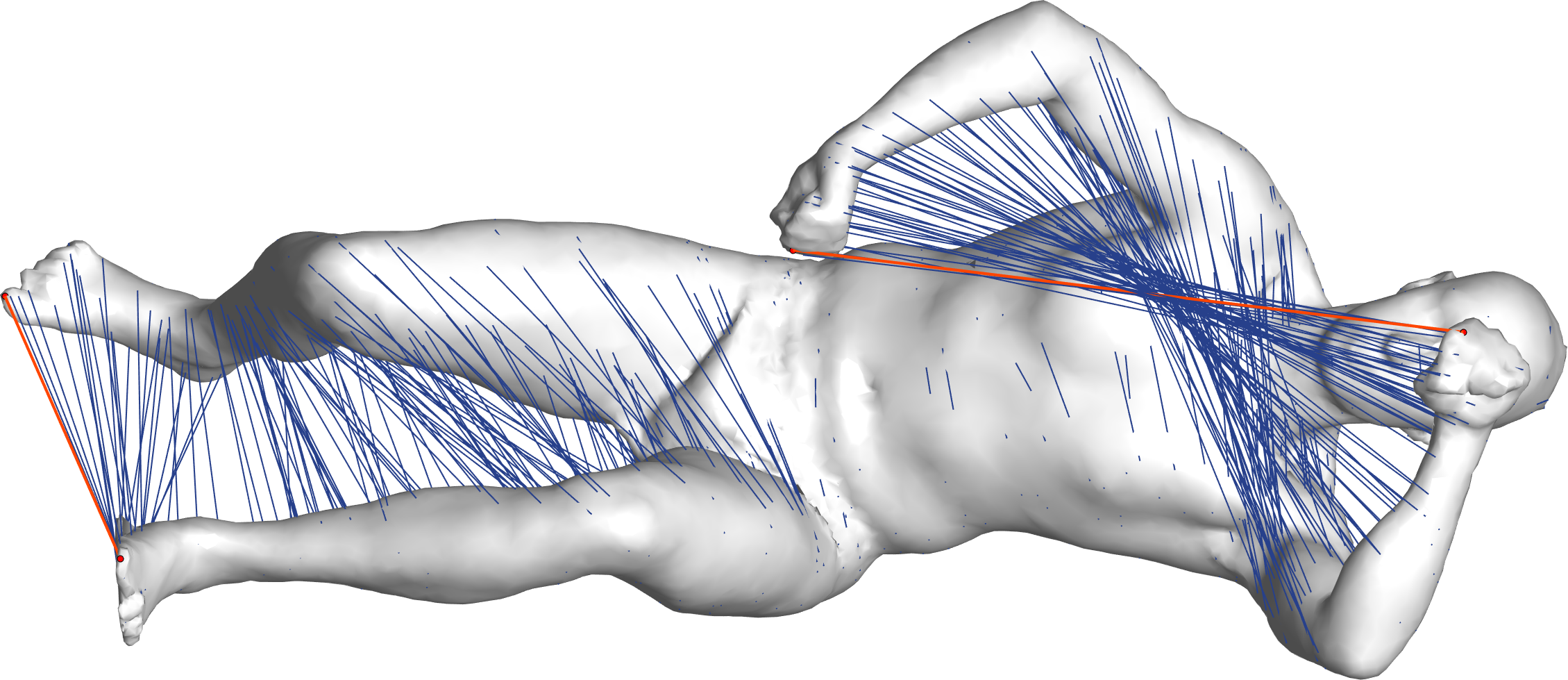,width=0.25\linewidth}
			\epsfig{figure=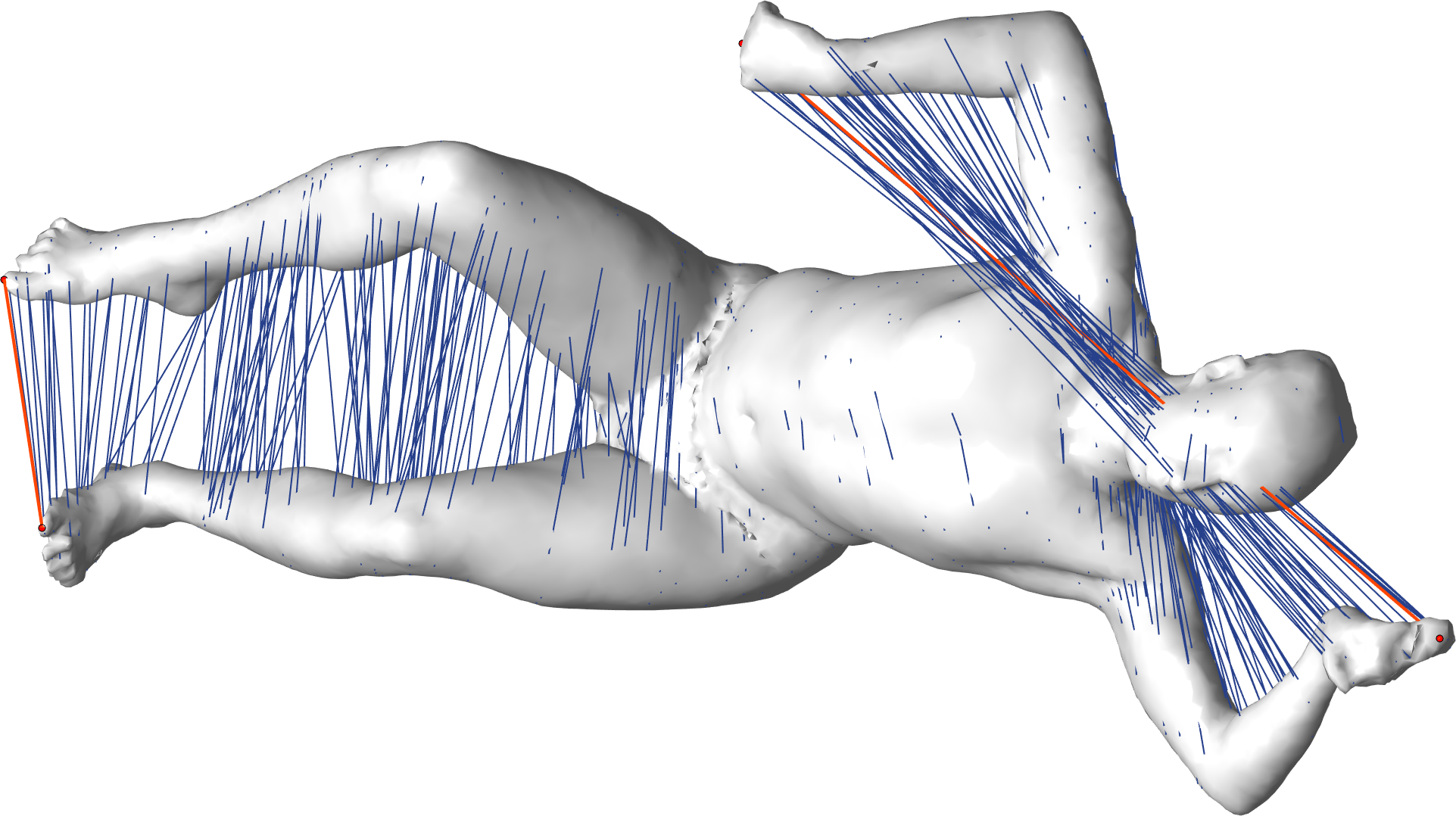,width=0.25\linewidth}
			\epsfig{figure=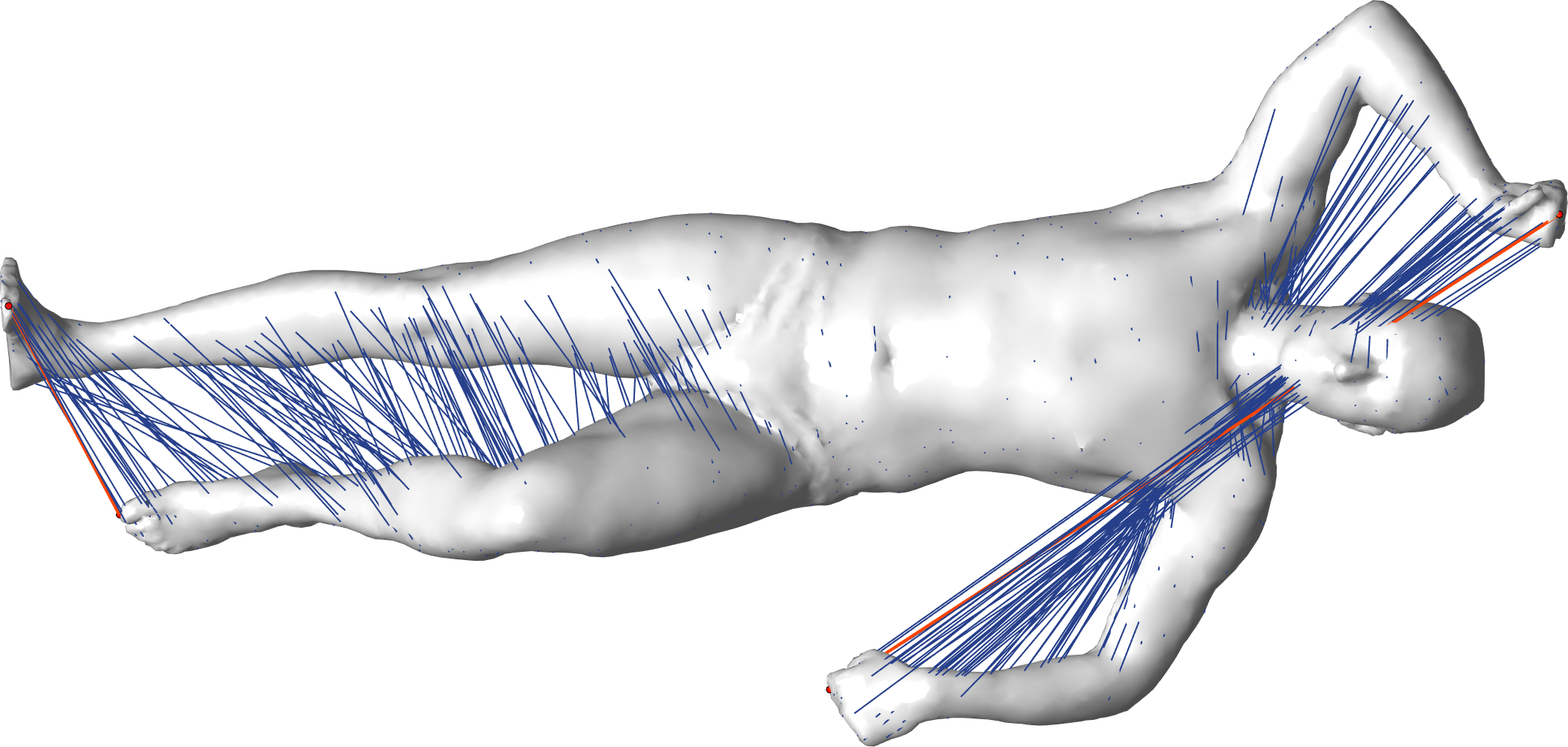,width=0.25\linewidth}
			\epsfig{figure=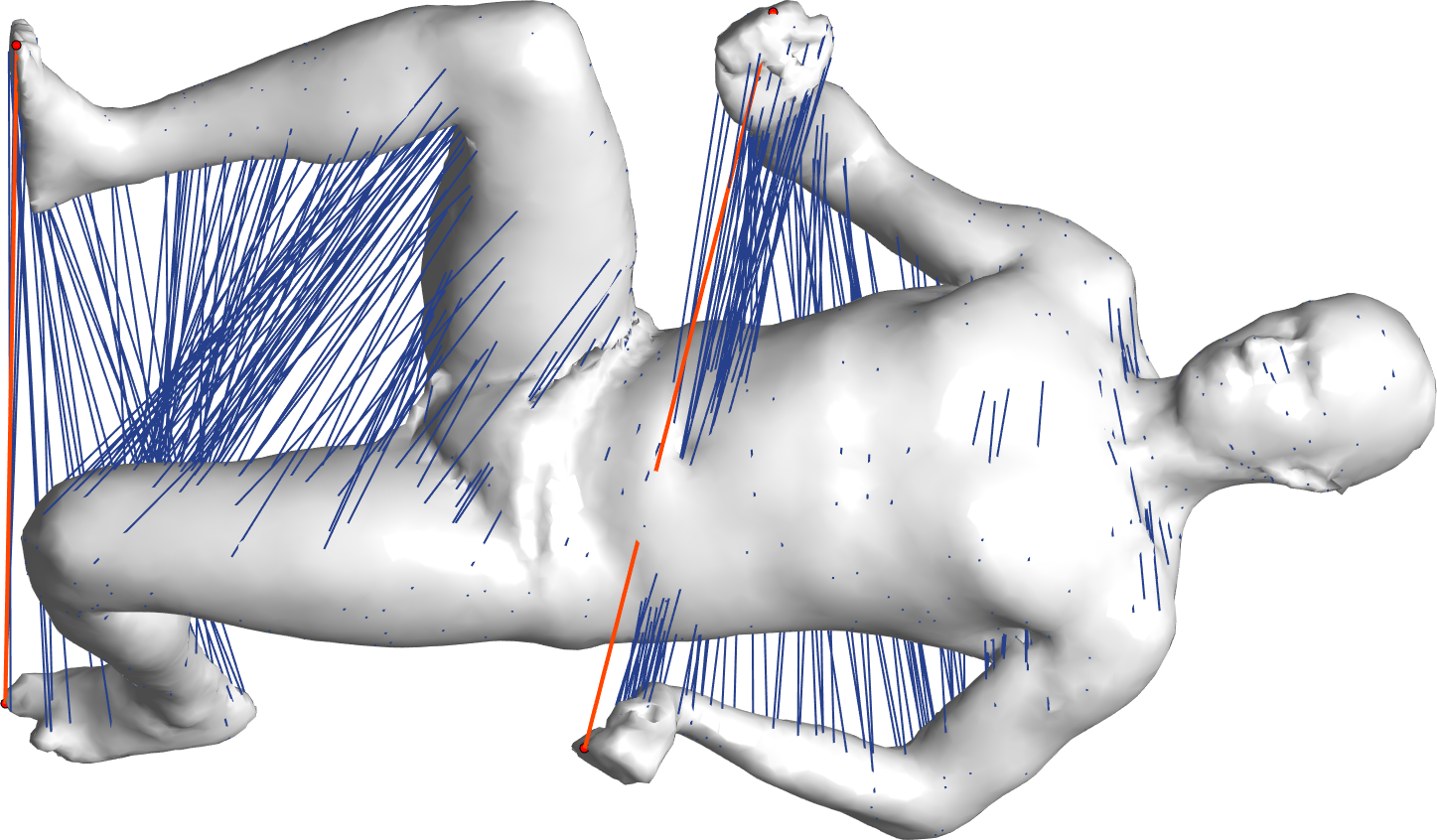,width=0.22\linewidth}
			\caption{Results of our approach on the TOSCA \cite{bronstein2008numerical}(first row) and the SCAPE \cite{anguelov2005scape} (second row) datasets. Detected correspondences (sparse) are shown in blue color. Correspondences in red color are the ones detected in Section \ref{subsec:3_4}.}
			\label{res_main}
		\end{center}
	\end{figure} 
%	\begin{figure}[t!]
%		\begin{center}
%			\epsfig{figure=gorilla08.png,width=0.18\linewidth}
%			\epsfig{figure=gorilla05.png,width=0.12\linewidth}
%			\epsfig{figure=dog01.png,width=0.14\linewidth}
%			\epsfig{figure=dog03.png,width=0.1\linewidth}
%			\epsfig{figure=centaur00.png,width=0.18\linewidth}
%			\epsfig{figure=centaur01.png,width=0.20\linewidth}
%			\epsfig{figure=mesh006.png,width=0.25\linewidth}
%			\epsfig{figure=mesh061.png,width=0.25\linewidth}
%			\epsfig{figure=mesh060.png,width=0.25\linewidth}
%			\epsfig{figure=mesh069.png,width=0.22\linewidth}
%			\caption{Results of our approach on the TOSCA \cite{bronstein2008numerical}(first row) and the SCAPE \cite{anguelov2005scape} (second row) datasets. Detected correspondences (sparse) are shown in blue color. Correspondences in red color are the ones detected in Section \ref{subsec:3_4}.}
%			\label{res_main}
%		\end{center}
%	\end{figure}
	$(x_j,x_{j^\prime}^\text{e})$ be the estimated correspondence, then the correspondence $(x_j,x_{j^\prime}^\text{e})$ is called true positive if the geodesic distance between the points $x_{j^\prime}^\text{g}$ and $x_{j^\prime}^\text{e}$ is less than $\sqrt{area(\mathcal{T})/20\pi}$ as used in \cite{kim2010mobius}. The correspondence rate is the fraction of true positive correspondences in the total estimated correspondences. \textbf{Mesh rate}: The mesh rate is the fraction of shapes for which the correspondence rate is more than $75\%$ in the total shapes as used in \cite{kim2010mobius}. \textbf{Time Complexity}: Total time required for computing symmetry for each shape in the given dataset. 
	\textbf{Datasets}. We evaluate our approach on the SCAPE \cite{anguelov2005scape} and TOSCA \cite{bronstein2008numerical} datasets. The SCAPE dataset contains 71 models. Each model in SCAPE dataset contains 12500 vertices and 24998 faces. The TOSCA dataset contains 80 models. On an average 20 ground truth intrinsically symmetric correspondences provided for each model in the datasets SCAPE and TOSCA. In the Fig.\ref{res_main}, we show a few results of the proposed approach on both the datasets.  We have only shown the sparsely detected correspondences for better visualization. \\ \textbf{Comparison Methods.} We compare the results of our approach on the datasets SCAPE and TOSCA with the four methods M{\"o}bius transformation voting (MT) \cite{kim2010mobius}, Blended Intrinsic Maps (BIM) \cite{kim2011blended}, Properly Constrained Orthogonal Functional Map (OFM) \cite{liu2015properly}, and Group Representation of Symmetries (GRS) \cite{wang2017group}.\\ \textbf{Discussions on the comparison.}  In Table \ref{tab:time}, we present the total time required for detecting the intrinsic symmetry in all the models of the TOSCA dataset for all the methods. We observe that our method is the fastest method on the TOSCA dataset. Our method takes around 6 seconds for each model whereas the method BIM takes around 270 seconds, the method OFM takes around 45 seconds, and the method GRS takes around 18 seconds. Our method takes 4.2 minutes to compute intrinsic symmetry in all the models of the SCAPE dataset. The possible reasons for our faster computation include finding the correspondence matrix using a closed form solution and determining the sign of eigenfunctions by computing the approximate shortest length geodesic curves between two intrinsically symmetric points. In Tables \ref{tab:res_scape}  and \ref{tab:res_tosca}, we present the correspondence rate (Corr rate) and the mesh rate for all the methods for all the models of the SCAPE and the TOSCA datasets, respectively.  The mesh rate for our method is equal to 100\% and the correspondence rate is equal to 97.8\% which is very close to the state-of-the-art correspondence rate 98\% of the method \cite{kim2011blended}. However, the average computation time for each mesh is around 270 seconds for the method \cite{kim2011blended}, whereas it is around 6 seconds for our method. We achieve the state-of-the-art performance on the SCAPE dataset. \begin{figure}[t!]
		\begin{center}
			\epsfig{figure=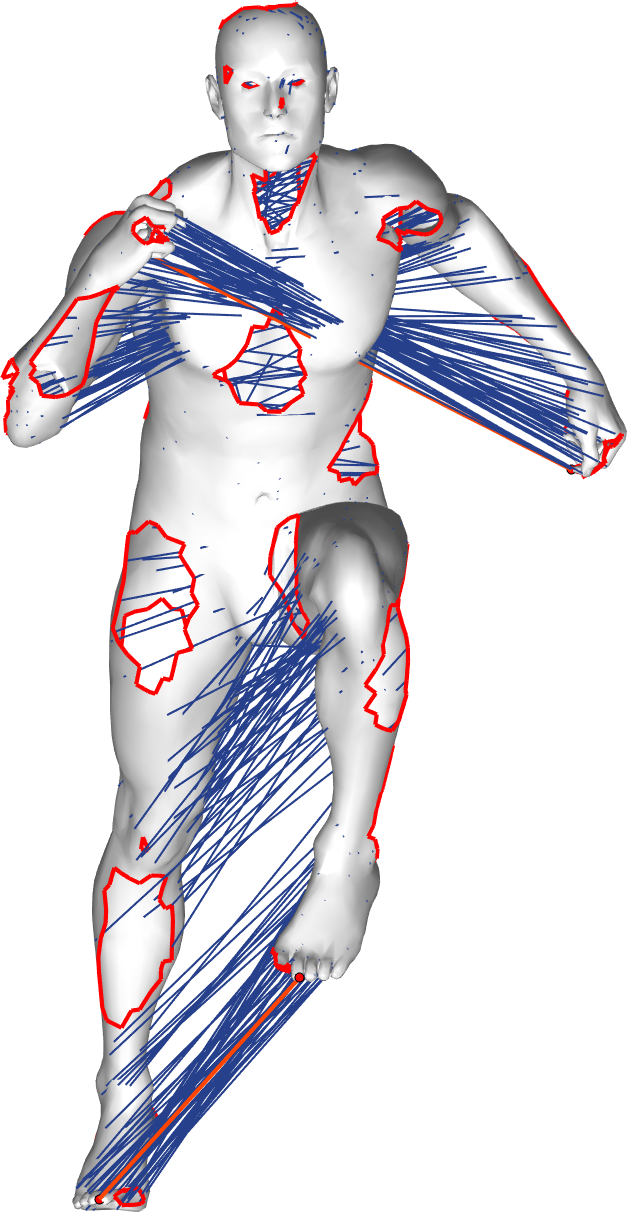,width=0.09\linewidth}
			\epsfig{figure=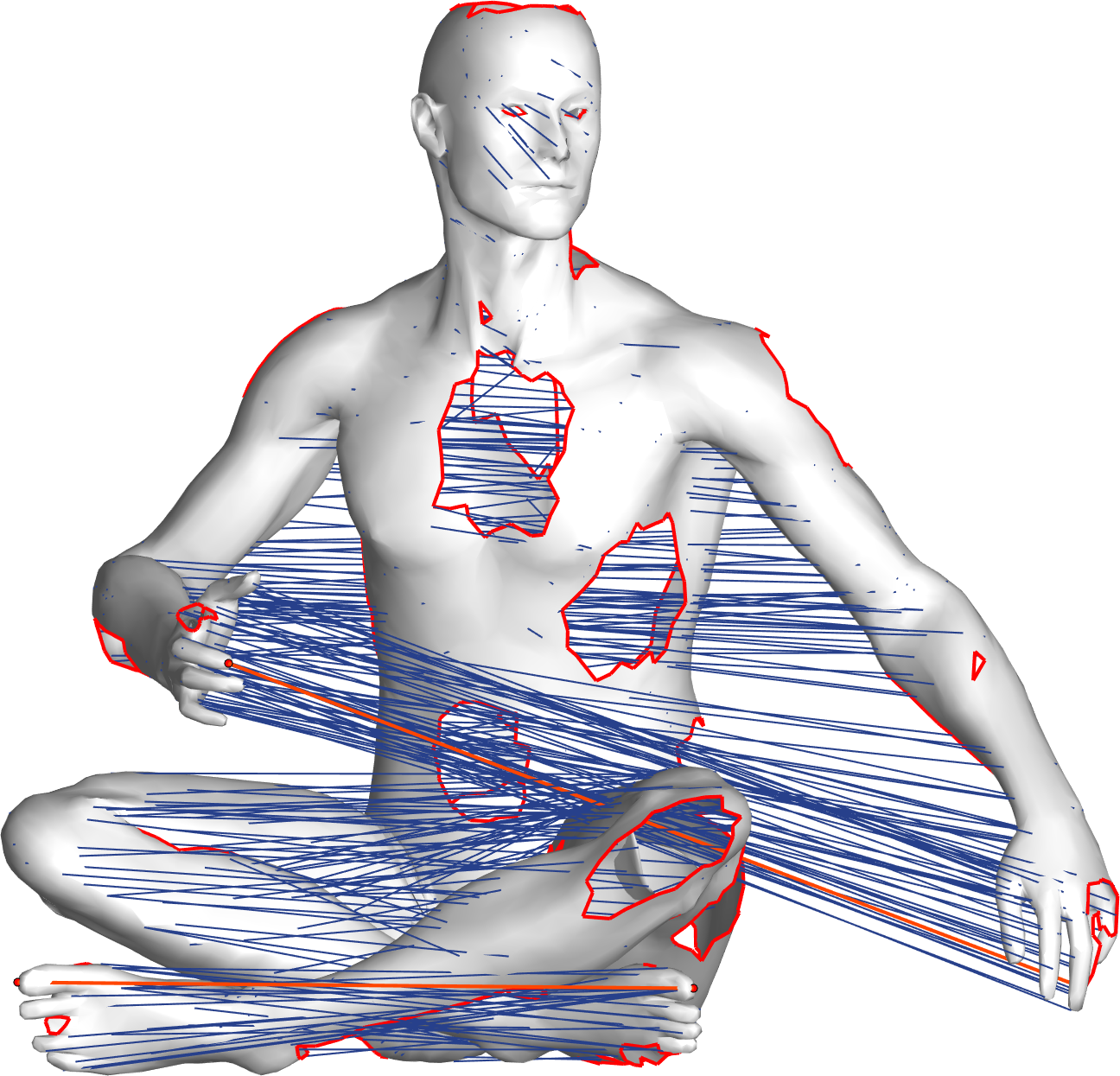,width=0.16\linewidth}
			\epsfig{figure=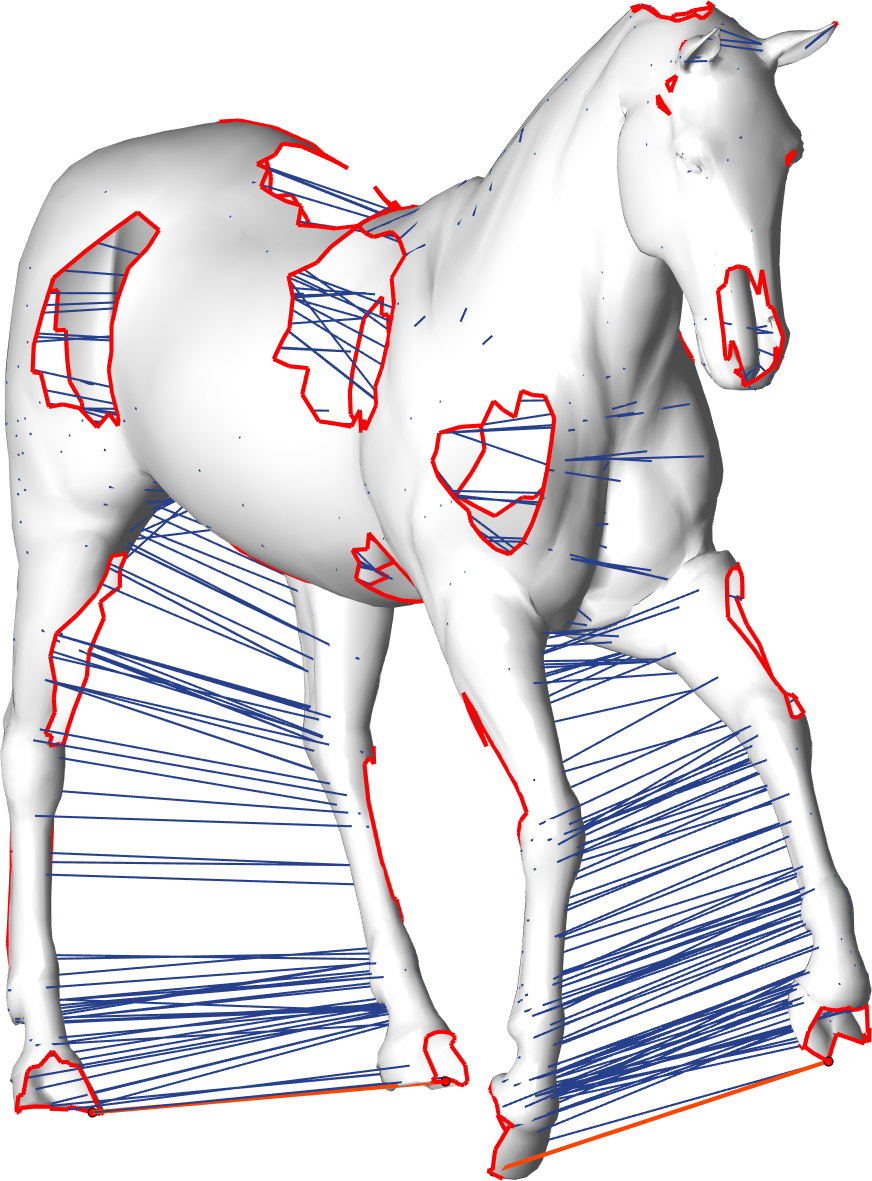,width=0.13\linewidth}
			\epsfig{figure=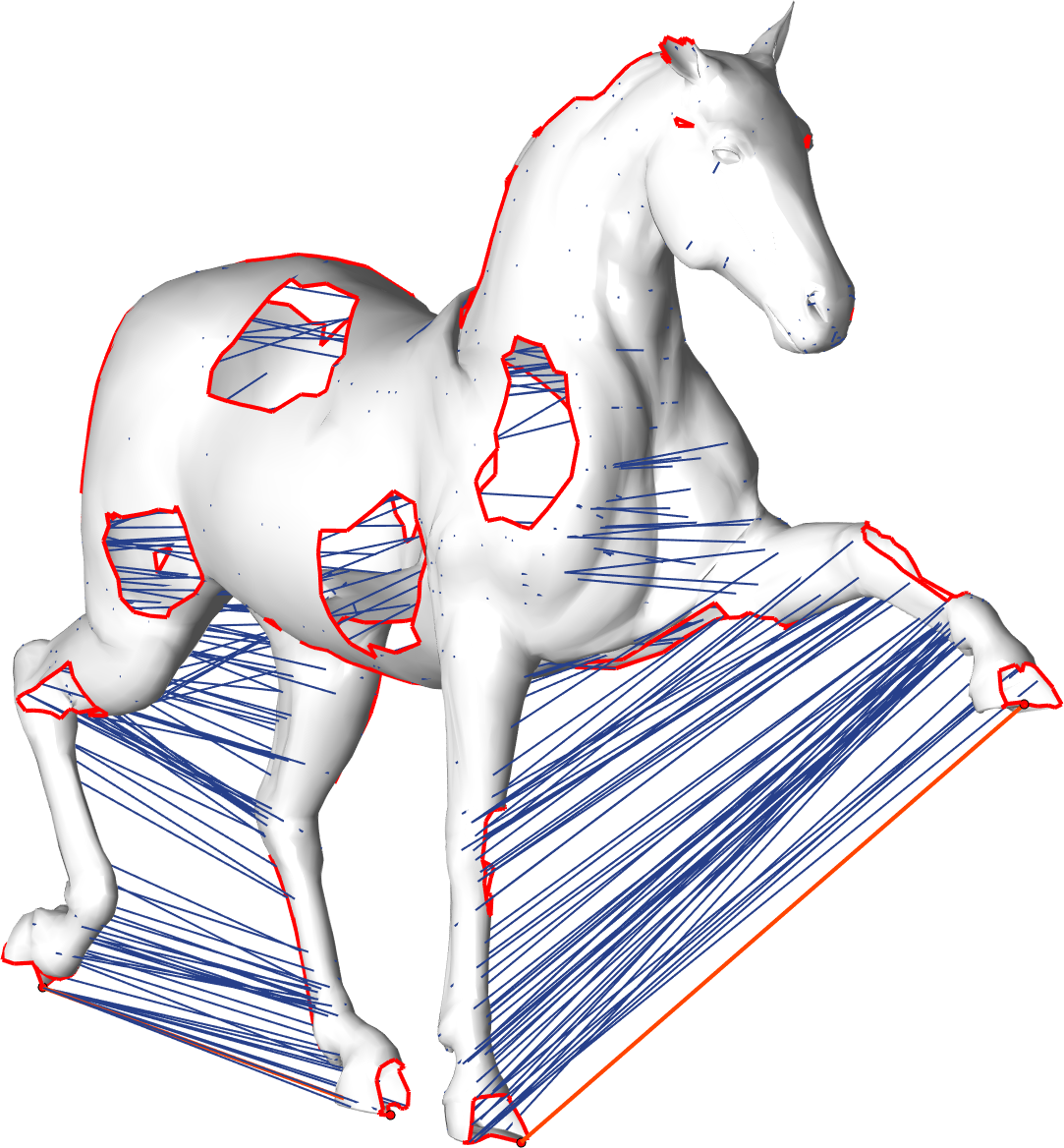,width=0.17\linewidth}
			\epsfig{figure=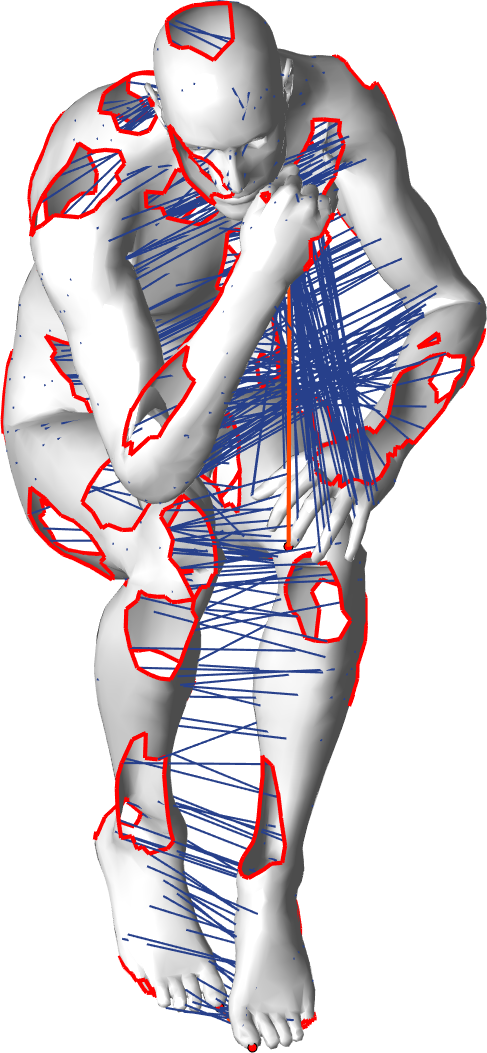,width=0.08\linewidth}            
			\epsfig{figure=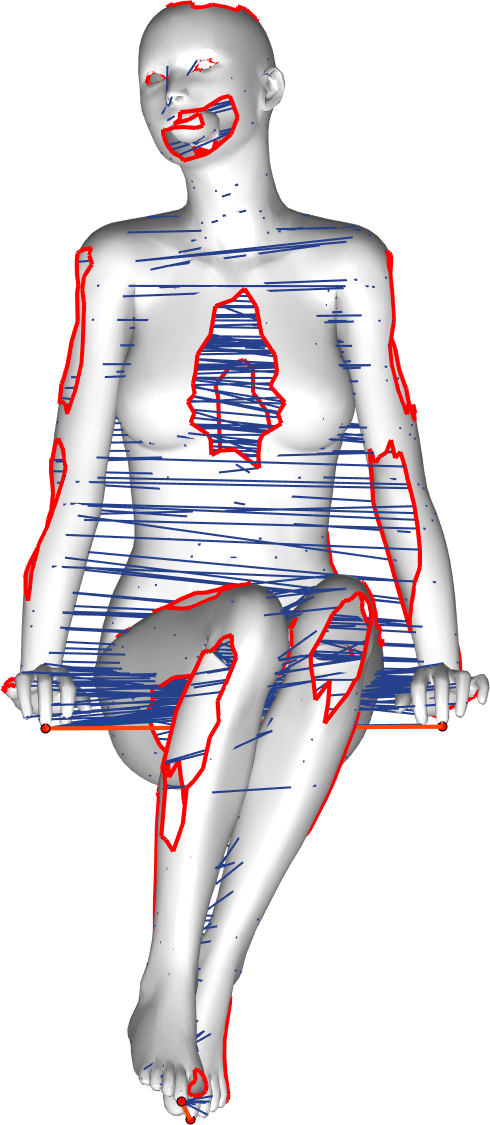,width=0.08\linewidth}
			\epsfig{figure=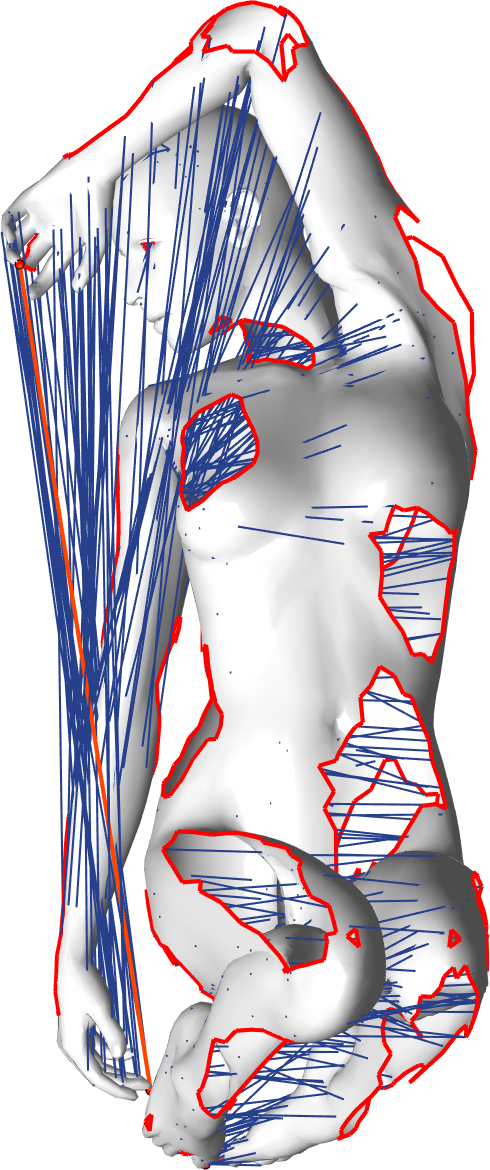,width=0.08\linewidth}
			\caption{Partial intrinsic symmetry detection results on the dataset SHREC16 \cite{cosmo2016shrec}.}
			\label{res_par}
		\end{center}
	\end{figure}\\ \textbf{Effect of holes}.  In Fig. \ref{res_par}, we show the detected intrinsic symmetry in the partial model from the SHREC16 \cite{cosmo2016shrec} dataset. Here, the partial shape is obtained by making holes in the original shape such that it contains 90\% area of the main shape. We observe that our method is invariant to significant holes.   
	\section{Conclusions}
	We have presented a fast and an accurate algorithm for detecting intrinsic symmetry in triangle meshes. We showed that the  functional correspondence matrix is diagonal and a diagonal entry is $+1\; (-1)$ if the corresponding eigenfunction is even (odd). We showed that the restriction of an even (odd) eigenfunction on the shortest length geodesic between any two intrinsically symmetric points also is an even (odd) function. This result has helped us to derive a closed form solution to find diagonal entries of this matrix. We achieved state-of-the-art performance on the SCAPE dataset and second best on the TOSCA dataset. We achieved the best time complexity. Furthermore, our approach is invariant to the ordering of eigenfunctions and robust to the presence of holes in the input mesh.
	Our method is limited to the intrinsic reflective symmetry. It can not find the other types of symmetries such as rotational symmetry. We would like to extend our approach to more general symmetries. Our approach may fail to detect intrinsic symmetry in non-connected manifolds. As future work, we would like to extend the functional map to detect intrinsic symmetries in non-connected manifolds.\\
	\textbf{Acknowledgment:} R. Nagar was supported by the TCS research Scholarship.
	\section*{Appendix}
	\textbf{Computing Riemannian gradient and Hessians of the Problem defined in Eq. (4) of the paper.}\\
	Consider the optimization problem defined in Eq. (3) of the main paper. Which is
	\begin{eqnarray}
	\nonumber&\underset{\mathbf{R}}{\min} \text{ off}(\mathbf{R}^\top\mathbf{D}\mathbf{R})+\|\mathbf{R}^\top\mathbf{D}\mathbf{R}-\mathbf{D}\|_\text{F}^2\\ \text{subject to}&\mathbf{R}^\top\mathbf{\Phi}^\top\mathbf{F}=\mathbf{C}\mathbf{R}^\top\mathbf{\Phi}^\top\mathbf{G},\;
	\mathbf{R}^\top\mathbf{R}=\mathbf{I},\text{det}(\mathbf{R})=+1,\mathbf{R}\in\mathbb{R}^{k\times k}.
	\label{eq:31}
	\end{eqnarray}    
	Here, $\mathbf{R}^\top\mathbf{R}=\mathbf{I}$ follows from the fact that the transformed basis $\mathbf{\Phi} \mathbf{R}$ is an orthogonal basis.  We have $\mathbf{R}^\top\mathbf{\Phi}^\top\mathbf{A}\mathbf{\Phi R}=\mathbf{I}\Rightarrow \mathbf{R}^\top\mathbf{R}=\mathbf{I}$ since $\mathbf{ \Phi}^\top\mathbf{A}\mathbf{\Phi}=\mathbf{I}$. We observe that the set  $\{\mathbf{R}\in\mathbb{R}^{k\times k}:\mathbf{R}^\top\mathbf{R}=\mathbf{I},\text{det}(\mathbf{R})=+1\}$ is the special orthogonal group $\mathcal{SO}(k)$.  Therefore, we solve the optimization problem  
	\begin{eqnarray}
	\underset{\mathbf{R}\in \mathcal{SO}(k)}{\min}\text{ off}(\mathbf{R}^\top\mathbf{D}\mathbf{R})+\|\mathbf{R}^\top\mathbf{D}\mathbf{R}-\mathbf{D}\|_\text{F}^2+\mu\|\mathbf{R}^\top\mathbf{\Phi}^\top\mathbf{F}-\mathbf{C}\mathbf{R}^\top\mathbf{\Phi}^\top\mathbf{G}\|_\text{F}^2
	\label{eq:41}
	\end{eqnarray}   
	Here, we choose $\mu=1$. Now, let $\bar{f}=\bar{f}_1+\bar{f}_2+\bar{f}_3$. Where, $\bar{f}:\mathcal{SO}(k)\rightarrow \mathbb{R}$, and $\bar{f}_i:\mathcal{SO}(k)\rightarrow \mathbb{R}$, $i=1,2,3$, and are defined as  $\bar{f}_1(\mathbf{R})=\text{off}(\mathbf{R}^\top\mathbf{D}\mathbf{R})$, $\bar{f}_2(\mathbf{R})=\|\mathbf{R}^\top\mathbf{D}\mathbf{R}-\mathbf{D}\|_\text{F}^2$, and  $\bar{f}_3(\mathbf{R})=\|\mathbf{R}^\top\mathbf{\Phi}^\top\mathbf{F}-\mathbf{C}\mathbf{R}^\top\mathbf{\Phi}^\top\mathbf{G}\|_\text{F}^2$.
	
	Let, $f:\mathbb{R}^{k\times k}\rightarrow\mathbb{R}$ be a scalar function on $\mathbb{R}^{k\times k}$ and defined as $f(\mathbf{R})=\text{ off}(\mathbf{R}^\top\mathbf{D}\mathbf{R})+\|\mathbf{R}^\top\mathbf{D}\mathbf{R}-\mathbf{D}\|_\text{F}^2+\|\mathbf{R}^\top\mathbf{\Phi}^\top\mathbf{F}-\mathbf{C}\mathbf{R}^\top\mathbf{\Phi}^\top\mathbf{G}\|_\text{F}^2$. Let $\nabla_\mathbf{R}f$ be its gradient (also called classical gradient).  Since the function  $\bar{f}$ is the restriction function of the function $f$ on $\mathcal{SO}(k)$, i.e., $\bar{f}:\mathcal{SO}(k)\rightarrow\mathbb{R}$ and $\bar{f}(\mathbf{R})=f(\mathbf{R})\forall\mathbf{R}\in\mathcal{SO}(k)$, the Rimannian gradient of the function $\bar{f}$ is defined as $\text{grad}_\mathbf{R} \bar{f}=\mathbb{P}(\nabla_\mathbf{R} f)=\frac{\mathbf{R}(\mathbf{R}^\top\nabla_\mathbf{R} f-(\nabla_\mathbf{R} f)^\top\mathbf{\mathbf{R}})}{2}$ \cite{absil2009optimization}. Therefore, in order to find the Riemannian gradient of the function $\bar{f}$, first we need to find the classical gradient $\nabla_\mathbf{R} f$ of the function $f$.  
	
	Now let $f=f_1+f_2+f_3$. Where, $f_i:\mathbb{R}^{k\times k}\rightarrow \mathbb{R}$, $i=1,2,3$, and defined as  $f_1(\mathbf{R})=\text{off}(\mathbf{R}^\top\mathbf{D}\mathbf{R})$, $f_2(\mathbf{R})=\|\mathbf{R}^\top\mathbf{D}\mathbf{R}-\mathbf{D}\|_\text{F}^2$, and  $f_3(\mathbf{R})=\|\mathbf{R}^\top\bar{\mathbf{F}}-\mathbf{C}\mathbf{R}^\top\bar{\mathbf{G}}\|_\text{F}^2$. Where, $\bar{\mathbf{F}}=\mathbf{\Phi}^\top\mathbf{F}$ and $\bar{\mathbf{G}}=\mathbf{\Phi}^\top\mathbf{G}$. We follow \cite{kovnatsky2013coupled} to find the gradients of the  function $f_1$ and $f_2$ which are defined as $\nabla_{\mathbf{R}}f_1=4(\mathbf{DDR}-\mathbf{O}\odot\mathbf{RD})$ and $\nabla_{\mathbf{R}}f_2=4(\mathbf{DDR}-\mathbf{DRD})$. Here, the matrix $\mathbf{O}=\mathbf{1}^\top\otimes \begin{bmatrix} \lambda^\prime_1&\lambda^\prime_2&\ldots&\lambda^\prime_k\end{bmatrix}^\top$,  $\mathbf{1}$ is a vector of size $k\times 1$ with all elements equal to 1, and $\lambda^\prime_1,\lambda^\prime_2,\ldots,\lambda^\prime_k$ are the diagonal entries of the matrix $\mathbf{R^\top DR}$. Now, we find the gradient of the function $f_3$ as follows. 
	\begin{eqnarray}
	\nonumber f_3(\mathbf{R})&=&\text{trace}((\mathbf{R}^\top\bar{\mathbf{F}}-\mathbf{C}\mathbf{R}^\top\bar{\mathbf{G}})^\top(\mathbf{R}^\top\bar{\mathbf{F}}-\mathbf{C}\mathbf{R}^\top\bar{\mathbf{G}}))\\
	\nonumber&=&\text{trace}((\bar{\mathbf{F}}^\top\mathbf{R}-\bar{\mathbf{G}}^\top\mathbf{R}\mathbf{C}^\top)(\mathbf{R}^\top\bar{\mathbf{F}}-\mathbf{C}\mathbf{R}^\top\bar{\mathbf{G}}))\\
	\nonumber&=&\text{trace}(\bar{\mathbf{F}}^\top\mathbf{R}\mathbf{R}^\top\bar{\mathbf{F}}-\bar{\mathbf{F}}^\top\mathbf{R}\mathbf{C}\mathbf{R}^\top\bar{\mathbf{G}}-\bar{\mathbf{G}}^\top\mathbf{R}\mathbf{C}^\top\mathbf{R}^\top\bar{\mathbf{F}}+\bar{\mathbf{G}}^\top\mathbf{R}\mathbf{C}^\top\mathbf{C}\mathbf{R}^\top\bar{\mathbf{G}})\\
	\nonumber&=&\text{trace}(\bar{\mathbf{F}}^\top\bar{\mathbf{F}}-\bar{\mathbf{F}}^\top\mathbf{R}\mathbf{C}\mathbf{R}^\top\bar{\mathbf{G}}-\bar{\mathbf{G}}^\top\mathbf{R}\mathbf{C}\mathbf{R}^\top\bar{\mathbf{F}}+\bar{\mathbf{G}}^\top\bar{\mathbf{G}})\\
	\nonumber&=&\text{trace}(\bar{\mathbf{F}}^\top\bar{\mathbf{F}})-\text{trace}(\bar{\mathbf{F}}^\top\mathbf{R}\mathbf{C}\mathbf{R}^\top\bar{\mathbf{G}})-\text{trace}(\bar{\mathbf{G}}^\top\mathbf{R}\mathbf{C}\mathbf{R}^\top\bar{\mathbf{F}})+\text{trace}(\bar{\mathbf{G}}^\top\bar{\mathbf{G}})\\
	\nonumber&=&\text{trace}(\bar{\mathbf{F}}^\top\bar{\mathbf{F}})-2\text{trace}(\bar{\mathbf{G}}^\top\mathbf{R}\mathbf{C}\mathbf{R}^\top\bar{\mathbf{F}})+\text{trace}(\bar{\mathbf{G}}^\top\bar{\mathbf{G}})
	\end{eqnarray}

	Now we have that $\nabla_\mathbf{R}(\text{trace}(\bar{\mathbf{F}}^\top\bar{\mathbf{F}}))=\mathbf{0}$ and $\nabla_\mathbf{R}(\text{trace}(\bar{\mathbf{G}}^\top\bar{\mathbf{G}}))=\mathbf{0}$. We use the definitions defined in \cite{petersen2008matrix} to find that $\nabla_\mathbf{R}(\text{trace}(\bar{\mathbf{G}}^\top\mathbf{R}\mathbf{C}\mathbf{R}^\top\bar{\mathbf{F}}))=(\bar{\mathbf{F}}\bar{\mathbf{G}}^\top+\bar{\mathbf{G}}\bar{\mathbf{F}}^\top)\mathbf{R}\mathbf{C}$. Therefore, $
	\nabla_\mathbf{R}f_3=-2(\bar{\mathbf{F}}\bar{\mathbf{G}}^\top+\bar{\mathbf{G}}\bar{\mathbf{F}}^\top)\mathbf{R}\mathbf{C}.
	$
	Therefore, we have
	$$
	\nabla_\mathbf{R} f=4(\mathbf{DDR}-\mathbf{O}\odot\mathbf{RD})+4(\mathbf{DDR}-\mathbf{DRD})-2(\bar{\mathbf{F}}\bar{\mathbf{G}}^\top+\bar{\mathbf{G}}\bar{\mathbf{F}}^\top)\mathbf{R}\mathbf{C}.
	$$
	Since $\text{grad}_\mathbf{R} f=\mathbb{P}(\nabla \bar{f})=\frac{\mathbf{R}(\mathbf{R}^\top\nabla f-(\nabla f)^\top\mathbf{\mathbf{R}})}{2}$, therefore we have that 
	
	$$
	\text{grad}_\mathbf{R}\bar{f}=2(\mathbf{R}(\mathbf{O}\odot\mathbf{RD})^\top\mathbf{R}-\mathbf{O}\odot\mathbf{RD})+2(\mathbf{RDR}^\top\mathbf{DR}-\mathbf{DRD})-\mathbf{SRC}+\mathbf{RCR^\top SR}
	$$
	
	Here, $\mathbf{S}=\mathbf{\Phi^\top FG^\top\Phi}+\mathbf{\Phi^\top FG^\top\Phi}$.  The analytical Reimannian Hessian contains many matrix multiplications (as shown below). Therefore, we use the numerical approach option in the \texttt{manopt} toolbox \cite{boumal2014manopt} to find the  Reimannian Hessian. 
	
	The Riemannian Hessian of the function $\bar{f}$ at $\mathbf{R}$ in the direction $\mathbf{R\Omega}$ is defined as $\text{Hess}_{\mathbf{R}}(\bar{f})[\mathbf{R\Omega}]=\mathbb{P}(\mathbb{D}(\nabla_\mathbf{R} f)[\mathbf{R\Omega}])$ \cite{absil2009optimization}. Where, $\mathbb{D}(\text{grad}_\mathbf{R} f)[\mathbf{R\Omega}]$ is the classical derivative of $\text{grad}_\mathbf{R} f$ in the direction $\mathbf{R\Omega}$ which is an element of the tangent space of the $\mathcal{SO}(k)$ at the point $\mathbf{R}$ and defined as 
	$\mathbb{D}(\text{grad}_\mathbf{R} \bar{f})[\mathbf{R\Omega}]=\frac{d}{dt}\text{grad}_\mathbf{R} \bar{f}(\mathbf{R}+t\mathbf{R\Omega})|_{t=0}$. Now
	\begin{eqnarray}
	\nonumber\mathbb{D}(\text{grad}_\mathbf{R} \bar{f})[\mathbf{R\Omega}]&=&\frac{d}{dt}\text{grad}_\mathbf{R} \bar{f}_1(\mathbf{R}+t\mathbf{R\Omega})|_{t=0}+\frac{d}{dt}\text{grad}_\mathbf{R} \bar{f}_2(\mathbf{R}+t\mathbf{R\Omega})|_{t=0}\\
	&&+\frac{d}{dt}\text{grad}_\mathbf{R} \bar{f}_3(\mathbf{R}+t\mathbf{R\Omega})|_{t=0}
	\end{eqnarray}
	\begin{eqnarray}
	\nonumber\mathbb{D}(\text{grad}_\mathbf{R} \bar{f}_1)[\mathbf{R\Omega}]&=&\frac{d}{dt}\text{grad}_\mathbf{R}\bar{f}_1(\mathbf{R}+t\mathbf{R\Omega})|_{t=0}\\
	\nonumber&=&\frac{d}{dt}\bigg(2((\mathbf{R}+t\mathbf{R\Omega})(\mathbf{O}\odot(\mathbf{R}+t\mathbf{R\Omega})\mathbf{D})^\top(\mathbf{R}+t\mathbf{R\Omega})\\
	\nonumber&&-\mathbf{O}\odot(\mathbf{R+R\Omega})\mathbf{D})\bigg)\bigg|_{t=0}\\
	\nonumber&=&2(\mathbf{R(O\odot RD)^\top R\Omega}+\mathbf{R(O\odot R\Omega D)^\top R}\\
	&&+\mathbf{R\Omega (O\odot RD)^\top R}-\mathbf{O\odot R\Omega D})
	\end{eqnarray}
	\begin{eqnarray}
	\nonumber\mathbb{D}(\text{grad}_\mathbf{R} \bar{f}_2)[\mathbf{R\Omega}]&=&\frac{d}{dt}\text{grad}_\mathbf{R}\bar{f}_2(\mathbf{R}+t\mathbf{R\Omega})|_{t=0}\\
	\nonumber&=&\frac{d}{dt}\bigg(2((\mathbf{R}+t\mathbf{R\Omega})D(\mathbf{R}+t\mathbf{R\Omega})^\top\mathbf{D}(\mathbf{R}+t\mathbf{R\Omega})\\
	\nonumber&&-\mathbf{D}(\mathbf{R}+t\mathbf{R\Omega})\mathbf{D})\bigg)\bigg|_{t=0}\\
	\nonumber&=&2(\mathbf{RDR^\top DR\Omega}+\mathbf{RD\Omega^\top R^\top DR}\\
	&&+\mathbf{R\Omega DR^\top DR}-\mathbf{DR\Omega D})
	\end{eqnarray}
	
	\begin{eqnarray}
	\nonumber\mathbb{D}(\text{grad}_\mathbf{R} \bar{f}_3)[\mathbf{R\Omega}]&=&\frac{d}{dt}\text{grad}_\mathbf{R}\bar{f}_3(\mathbf{R}+t\mathbf{R\Omega})|_{t=0}\\
	\nonumber&=&\frac{d}{dt}\bigg(2((\mathbf{R}+t\mathbf{R\Omega})C(\mathbf{R}+t\mathbf{R\Omega})^\top\mathbf{S}(\mathbf{R}+t\mathbf{R\Omega})\\
	\nonumber&&-\mathbf{S}(\mathbf{R}+t\mathbf{R\Omega})\mathbf{C})\bigg)\bigg|_{t=0}\\
	\nonumber&=&\mathbf{RCR^\top SR\Omega}+\mathbf{RC\Omega^\top R^\top SR}\\
	&&+\mathbf{R\Omega CR^\top SR}-\mathbf{SR\Omega C}
	\end{eqnarray}
	Therefore,
	\begin{eqnarray}
	\nonumber\mathbb{D}(\text{grad}_\mathbf{R} \bar{f})[\mathbf{R\Omega}]&=&2(\mathbf{R(O\odot RD)^\top R\Omega}+\mathbf{R(O\odot R\Omega D)^\top R}\\
	\nonumber&&+\mathbf{R\Omega (O\odot RD)^\top R}-\mathbf{O\odot R\Omega D})\\
	\nonumber&&+2(\mathbf{RDR^\top DR\Omega}+\mathbf{RD\Omega^\top R^\top DR}\\
	\nonumber&&+\mathbf{R\Omega DR^\top DR}-\mathbf{DR\Omega D})\\
	\nonumber&&+\mathbf{RCR^\top SR\Omega}+\mathbf{RC\Omega^\top R^\top SR}\\
	&&+\mathbf{R\Omega CR^\top SR}-\mathbf{SR\Omega C}
	\end{eqnarray}
	Now, the Riemannian Hessian  $\text{Hess}_{\mathbf{R}}(\bar{f})[\mathbf{R\Omega}]=\mathbb{P}(\mathbb{D}(\text{grad}_\mathbf{R} \bar{f})[\mathbf{R\Omega}])$ which contains many matrix multiplications.

	\bibliographystyle{splncs04}
	
\end{document}